\newcommand{\supp}{\text{supp}}
\newcommand{\triangleq}{\stackrel{\triangle}{=}}
\def\eqref#1{equation~\ref{#1}}
\def\1{\bm{1}}
\DeclareMathAlphabet{\mathsfit}{\encodingdefault}{\sfdefault}{m}{sl}
\SetMathAlphabet{\mathsfit}{bold}{\encodingdefault}{\sfdefault}{bx}{n}
\newcommand{\E}{\mathbb{E}}
\newcommand{\KL}{D_{\mathrm{KL}}}
\newcommand{\Var}{\mathrm{Var}}
\newcommand{\ourapproach}{RL-PLUS}
\renewcommand{\E}{\mathbb{E}}
\renewcommand{\KL}{\mathcal{D}_{\text{KL}}}
\newcommand{\pione}{\pi_\omega}
\newcommand{\pitwo}{\pi_{\theta_\text{old}}}
\newtheorem{theorem}{Theorem}[section]
\newtheorem{lemma}[theorem]{Lemma}
\newtheorem{assumption}[theorem]{Assumption}
\theoremstyle{definition}
\newtheorem{definition}[theorem]{Definition}
\theoremstyle{remark}
\newtheorem{remark}[theorem]{Remark}
\title{ 
\fontsize{15.9pt}{22pt}\selectfont RL-PLUS: Countering Capability Boundary Collapse of LLMs in Reinforcement Learning with Hybrid-policy Optimization
}
\author{Yihong Dong$^{1,2}$, Xue Jiang$^{1,2}$, Yongding Tao$^{1}$, Huanyu Liu$^{1}$, Kechi Zhang$^{1}$, Lili Mou$^{3,4}$,\\ \textbf{Rongyu Cao$^{2}$, Yingwei Ma$^{2}$, Jue Chen$^{2}$, Binhua Li$^{2}$, Zhi Jin$^{1}$, Fei Huang$^{2}$, Yongbin Li$^{2}$, Ge Li$^{1}$}\\
$^1$ School of Computer Science, Peking University \quad $^2$ Tongyi Lab, Alibaba Group\\
$^3$ Department of Computing Science, University of Alberta \quad $^4$ Canada CIFAR AI Chair\\
\texttt{dongyh@stu.pku.edu.cn} \quad \texttt{lige@pku.edu.cn}\\\
}
\begin{document}

\maketitle

\begin{abstract}

Reinforcement Learning with Verifiable Reward (RLVR) has significantly advanced the complex reasoning abilities of Large Language Models (LLMs). However, it struggles to break through the inherent capability boundaries of the base LLM, due to its essentially on-policy strategy coupled with LLM's immense action space and sparse reward. Critically, RLVR can lead to the capability boundary collapse, narrowing the LLM's problem-solving scope.
To address this problem, we propose RL-PLUS, a novel hybrid-policy optimization approach for LLMs that synergizes internal exploitation with external data to achieve stronger reasoning capabilities and surpass the boundaries of base models. 
RL-PLUS integrates two core components, i.e., Multiple Importance Sampling to address distributional mismatch from external data, and Exploration-Based Advantage Function to guide the model towards high-value, unexplored reasoning paths. 
We provide both theoretical analysis and extensive experiments to demonstrate the superiority and generalizability of our approach. Compared with existing RLVR methods, RL-PLUS achieves 1) state-of-the-art performance on six math reasoning benchmarks; 2) superior performance on six out-of-distribution reasoning tasks; 3) consistent and significant gains across diverse model families, with average relative improvements up to 69.2\%. Moreover, the analysis of Pass@k curves indicates that RL-PLUS effectively resolves the capability boundary collapse problem.\footnotetext{Work done during Yihong Dong and Xue Jiang's internship at Tongyi Lab.}\footnote{Our code is available at \url{https://github.com/YihongDong/RL-PLUS}.}

\end{abstract}

\section{Introduction}
The paradigm of Reinforcement Learning with Verifiable Reward (RLVR) has significantly propelled the improvement of reasoning performance in Large Language Models (LLMs), particularly in solving complex tasks involving math and coding \citep{openaio1, deepseekr1, Kimi_k1.5}. 
RLVR optimizes LLMs' performance via a reinforcement learning (RL) process guided by verifiable reward computation, e.g., determining whether an output matches a ground-truth math answer or passes unit tests for coding. This method enables LLMs to scale their computation at test time by extending Chain-of-Thought (CoT) processes and spontaneously exhibit sophisticated cognitive behaviors such as reflection and exploration. Thus, RLVR is believed to be a promising way for LLMs to achieve continuous self-evolution toward more powerful AI \citep{deepseekr1}.

\begin{figure*}[t!]
    \centering
    \begin{subfigure}[b]{0.72\textwidth}
        \centering
        \includegraphics[width=\textwidth]{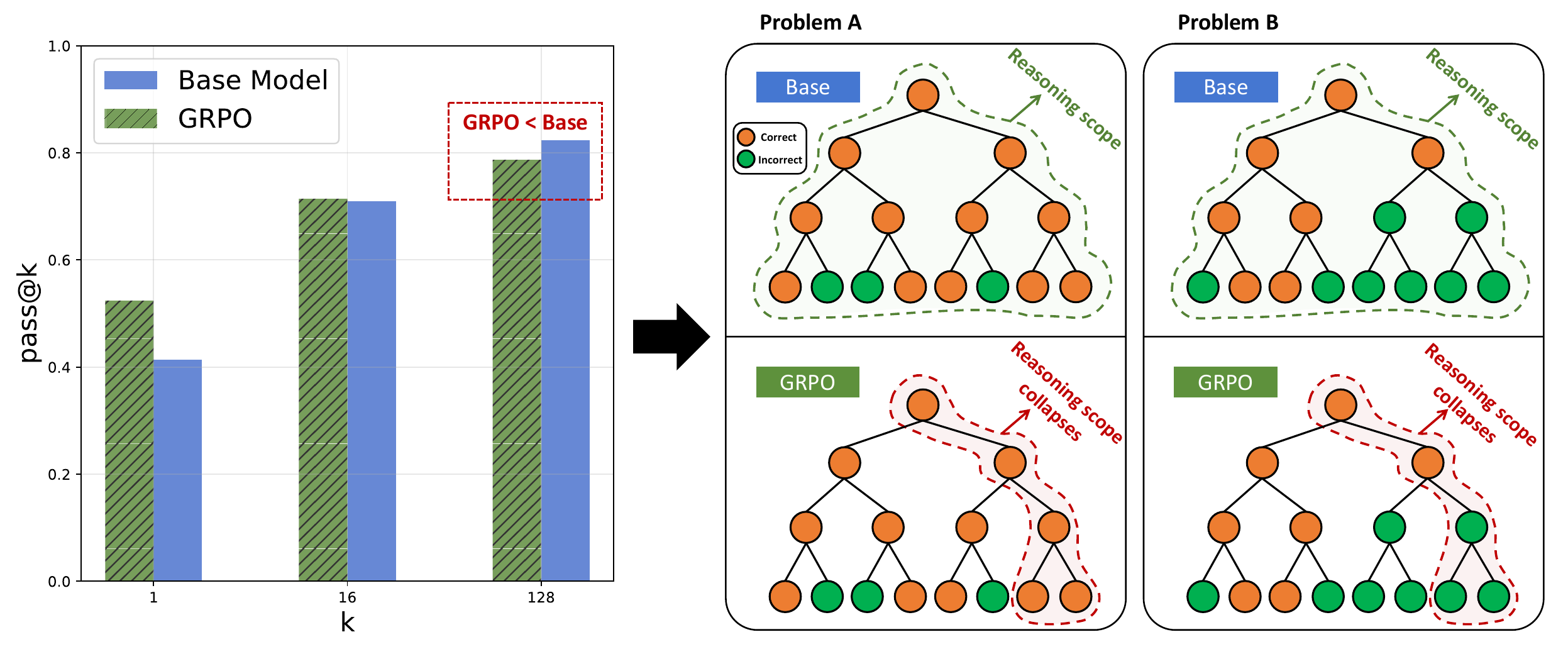}
        \caption{Collapse problem of capability boundaries in LLMs after RLVR training.}
        \label{fig:motivation1}
    \end{subfigure}
    \hfill
    \begin{subfigure}[b]{0.27\textwidth}
        \centering
        \includegraphics[width=\textwidth]{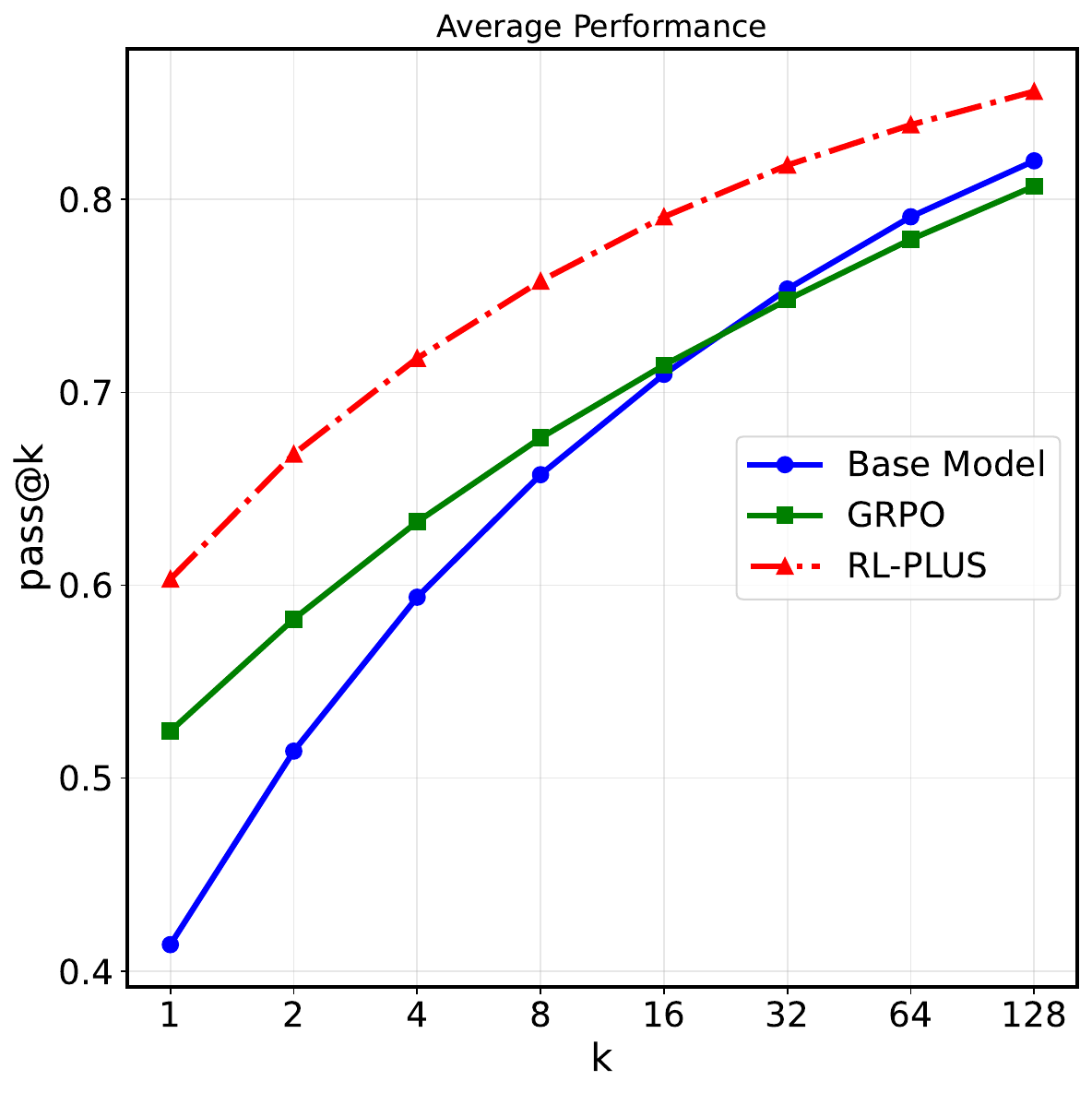}
        \caption{Benefits of our \ourapproach.}
        \label{fig:motivation2}
    \end{subfigure}
    \caption{(a) The commonly used RLVR methods can lead to the collapse problem of capability boundaries in base LLMs. (b) \ourapproach~can overcome capability boundary collapse of LLMs in RLVR, consistently showing larger pass@k than base model.
    }
    \label{fig:motivation}
\end{figure*}

Despite the empirical successes, some work \citep{ppo_meta, DeepSeekMath, ReasoningCapacity} points out that current RLVR cannot enable LLMs to acquire novel reasoning abilities, but rather simply utilize reasoning patterns already in the base model. As shown in Figure 1(a), although the pass@1 performance of RLVR-trained models surpasses that of the base model, its pass@128\footnote{The pass@k calculates the proportion of problems the model can potentially solve within a finite (k) number of attempts metric is commonly used to gauge a model's capability boundary.} is substantially lower. This trend suggests that the underlying capability distribution of the base model is broader and that existing RLVR can collapse the base model's capability boundary, thus fundamentally limiting the acquisition of new reasoning pathways.  

This limitation stems from an essential challenge when applying RLVR to LLMs: the potential solution space of LLMs is extremely immense with sparse reward that current RLVR techniques cannot effectively guide the model to explore new and unknown pathways, i.e., outward exploration. The challenge is particularly acute in long reasoning tasks where rewards are contingent upon the successful completion of an entire inferential chain. A single erroneous step can nullify the reward for the entire trajectory, thus failing to provide a positive signal for acquiring new knowledge. Consequently, the model is compelled to focus on inward exploitation, meaning that it refines and optimizes the knowledge and reasoning methods it already possesses, which results in a contraction of the model's exploratory range and a shrinking of its capabilities. This phenomenon not only prevents the model from acquiring new information or abilities that surpass its base model, but also significantly impedes any sustained enhancement of its overall performance. 

The ancient educational principle that \textit{``If one learns from others but does not think, one will be bewildered. If, on the other hand, one thinks but does not learn from others, one will be in peril''}\footnote{A principle from the philosopher and educator Confucius.} offers a crucial lens through which to view the limitations of current methodologies for enhancing LLM reasoning. Current RLVR can be viewed as the latter case, which excels at ``thinking'' through inward exploitation but demonstrates inadequate outward exploration due to its inherently on-policy strategy coupled with LLM’s immense action space and sparse reward, i.e., hard to continuous ``learning'' of new knowledge.
Conversely, approaches like Supervised Fine-Tuning (SFT) represent the former case, focusing on imitating solutions but failing to internalize the underlying reasoning principles, leading to brittleness when encountering novel problems.

This motivates us to develop novel RLVR approaches with effective external learning, but there are two key challenges that need to be addressed.
First, a distributional mismatch between the model's policy and the external data source is inevitable. Standard importance sampling corrections for RL are inadequate, i.e., employing the proxy with on-policy introduces systematic bias, whereas direct using off-policy usually suffers from high variance and bias due to their significantly divergent distributions. 
Second, there is a challenge of efficiently extracting valuable information from this external data. Models are naturally inclined to favor high-probability tokens, thus reinforcing existing knowledge. However, the key to discovering novel reasoning often lies in exploring low-probability tokens that the model would otherwise ignore.

In this paper, we propose RL-PLUS, a novel hybrid-policy optimization approach designed to synergize internal exploitation with external data during RL process.
Specifically, \ourapproach~has two core techniques. \ding{182} To resolve the issue of distributional mismatch, we employ Multiple Importance Sampling, which provides a lower bias and variance estimation of importance by combining information from multiple policies. \ding{183} To promote the discovery of new knowledge, we introduce an Exploration-Based Advantage Function, which reshapes the learning objective by prioritizing advantages for reasoning paths that are correct but are hard to explore (i.e., low probability) under the current policy. 
We also provide a theoretical analysis demonstrating that our approach achieves lower bias and variance compared with mainstream RLVR methods when leveraging external data.

Extensive experiments show the effectiveness and generalization of RL-PLUS. On six challenging math reasoning benchmarks, RL-PLUS achieves state-of-the-art (SOTA) performance, outperforming existing RLVR methods and improving upon SFT+GRPO by 5.2 average points. RL-PLUS also demonstrates superior generalization to six out-of-distribution (OOD) tasks.  RL-PLUS exhibits clear and stable improvements across diverse model families, with the average relative improvements of GRPO up to 69.2\%. Moreover, the analysis of Pass@k curves across multiple benchmarks indicates that RL-PLUS effectively transcends the inherent capability ceiling of the base model, thus addressing capability boundary collapse observed in prior RLVR approaches.

\section{Background and Related Work}
In this section, we first establish the theoretical preliminaries necessary to understand our work, and then provide a critical review of the most related work, identifying key limitations in existing methods and thereby motivating the design of our proposed RL-PLUS.

\subsection{Preliminary Knowledge}

\paragraph{LLM-based Reasoning as a Markov Decision Process.} We frame the task of generating a reasoning sequence (e.g., a solution to a math problem) as a Markov Decision Process (MDP) ~\citep{MDP}. At each timestep $t$, the state $s_t$ consists of the initial prompt $q$ concatenated with the sequence of previously generated tokens, $y_{<t}$. The action $a_t$ is the selection of the next token $y_t$ from the vocabulary. The model, or policy $\pi_\theta$, maps a state to a distribution over actions. A reward $R(q, y)$ is provided only upon completion of the entire sequence $y$. In the context of RLVR, this reward is typically sparse and binary. For example, a score is 1 if the final answer is correct and 0 otherwise. The objective is to learn a policy $\pi_\theta$ that maximizes the expected cumulative reward $J(\theta) = \mathbb{E}_{y \sim \pi_\theta}[R(q, y)]$.

\paragraph{Policy Gradient Optimization.} Policy gradient methods are the standard for optimizing LLMs in on-policy RLVR settings. Group Relative Policy Optimization (GRPO)~\citep{DeepSeekMath} shows exceptional performance in various tasks, especially to enable effective scaling within the RLVR paradigm. Compared to Proximal Policy Optimization (PPO) algorithm~\citep{PPO}, GRPO leverages group-normalized rewards to estimate advantages, eliminating the need for a value model and thereby improving computational efficiency. The standard GRPO objective is:
\begin{equation}
\label{equ:obj}
\mathcal{J}_{\text{RL}}(\theta) = \mathbb{E}_{(q,y) \sim \mathcal{D}_{\text{on}}} \left[ \sum_{t=1}^{|y|} \min \left( r_{i,t}(\theta)A_i, \text{clip}(r_{i,t}(\theta), 1-\epsilon, 1+\epsilon)A_i \right) \right] - \beta \mathcal{D}_{\text{KL}}[\pi_\theta \| \pi_{\text{ref}}]
\end{equation}
\begin{equation}
    r_{i,t}(\theta) = \frac{\pi_\theta(o_{i,t} \mid q, o_{i,<t})}{\pi_{\theta_{\text{old}}}(o_{i,t} \mid q, o_{i,<t})},
\end{equation}
\begin{equation}
    A_i = \frac{R_i - \text{mean}(\{R_1, R_2, \ldots, R_G\})}{\text{std}(\{R_1, R_2, \ldots, R_G\})},
\end{equation}

where $r_{i,t}(\theta)$ is the importance sampling ratio and $A_i$ is the estimated advantage for an on-policy trajectory $y_i$. Recent work, such as Simple-rl~\citep{Simplerl} and DAPO~\citep{DAPO}, has proposed either setting the KL coefficient $\beta$ to a very small value or omitting the KL term in Equation~\ref{equ:obj} entirely. The rationale is that during the training of a model for long CoT reasoning, the model's distribution is expected to diverge significantly from the initial policy, rendering this constraint unnecessary.

\paragraph{Evaluating Reasoning Boundaries with pass@k.} To accurately assess a model's true problem-solving capabilities, we utilize the pass@k metric~\citep{codex}. It measures the probability of obtaining at least one correct answer within $k$ independent samples for a given problem. Unlike mean accuracy (i.e., pass@1), pass@k provides a more comprehensive view of the model's reasoning potential and is critical for evaluating whether a method expands the set of solvable problems~\citep{ReasoningCapacity}.

This on-policy RLVR paradigm, while powerful, leads to two fundamental challenges when the goal is to surpass a base model's intrinsic capabilities: 1) an inability to effectively integrate novel, external knowledge due to the high variance and bias associated with off-policy data, and 2) the tendency for on-policy exploration to collapse into known, high-probability reasoning paths, thereby shrinking the model's reasoning boundary. These challenges directly motivate our approach.

\subsection{Related Work}
We position RL-PLUS by critically examining two primary lines of research: on-policy RLVR for reasoning and hybrid SFT-RL methods.

\paragraph{On-Policy RLVR and Its Intrinsic Limitations}

Reinforcement learning has become a cornerstone for enhancing LLM reasoning \citep{yue2025vapo, understanding, one_example}. Seminal works have shown that RLVR can significantly improve performance on complex reasoning tasks by rewarding correct final answers~\citep{deepseekr1, Simplerl, Open-reasoner-zero}. Subsequent research has refined this paradigm; for instance, PRIME-Zero~\citep{PRIME_Zero} uses implicit process rewards, and Oat-Zero~\citep{oat-zero} simplifies the advantage calculation in GRPO.

However, a growing body of evidence reveals a critical flaw in these on-policy methods: they primarily optimize existing knowledge rather than discovering new reasoning capabilities.
This leads to two well-documented issues. First is the Capability Boundary Collapse problem. While RLVR models often show superior pass@1 performance, their advantage diminishes as $k$ increases in pass@k evaluations, with base models eventually surpassing them~\citep{ReasoningCapacity}. This strongly suggests that RLVR refines the probability of known correct paths but fails to expand the overall set of solvable problems. Second, these methods suffer from Entropy Collapse, where policy entropy sharply decreases during training, making the model overly deterministic and hindering further exploration~\citep{entropy_mechanism}. 
This indicates that on-policy RLVR, by its nature, is prone to inward exploitation that reinforces existing biases and limits the model's potential.

\paragraph{Hybrid SFT-RL Methods}

To overcome knowledge limitations of pure RL, researchers explored hybrid methods that combine RL with SFT on external demonstration data~\citep{cai2025much}. Early approaches employed sequential, multi-stage training (SFT then RL), as seen in models like InstructGPT~\citep{InstructGPT}. While conceptually simple, this often leads to catastrophic forgetting of the SFT-learned knowledge and suffers from computational inefficiency.

More recent work has focused on unified or interleaved training frameworks. For example, ReLIFT~\citep{ReLIFT} alternates between RL and online fine-tuning on difficult problems, while LUFFY~\citep{LUFFY} selectively imitates high-quality external trajectories using a mixed policy. In another example, TAPO~\citep{TAPO} enhances RL by integrating external, high-level guidance in the form of ``thought patterns" abstracted from prior data. Other methods, such as SASR~\citep{SASR} and SuperRL~\citep{SuperRL}, employ adaptive switches to dynamically balance SFT and RL objectives based on training state. While these methods are more sophisticated, they often rely on complex, potentially unstable heuristics for balancing the two learning signals. Moreover, simply adding an SFT loss to the RL objective, as explored in ``GRPO w/ SFT Loss", can degrade performance, highlighting the difficulty of effective integration. Even advanced frameworks like UFT~\citep{UFT}, which aim to unify SFT and RL to accelerate convergence, do not explicitly address how to stabilize off-policy updates while simultaneously directing exploration towards novel solutions.

\paragraph{Motivation}

The foregoing analysis reveals persistent gaps in the related work. On-policy RLVR methods are constrained by the base model's inherent knowledge, while existing hybrid SFT-RL methods lack a principled mechanism to both stabilize learning from external, off-policy data and explicitly incentivize exploration of low-probability but correct reasoning pathways. RL-PLUS is designed to directly address these deficiencies.

\section{RL-PLUS}

RL-PLUS overcomes the LLM's capability boundaries collapse problem in RLVR by integrating externally-guided exploration with the exploitation of internal reasoning pathways.

\subsection{Mitigating Distributional Mismatch with Multiple Importance Sampling}

A central challenge in learning from a static dataset $\mathcal{D}_e = \{e_i\}_{i=1}^{N}$ is the distributional shift between the target policy $\pi_\theta$ and the unknown behavior policy $\pi_\omega$. Standard importance sampling (IS) presents a dilemma for correcting this mismatch. On-policy IS estimator, which uses a proxy like $\pi_{\theta_{\text{old}}}$ in the denominator, is systematically biased when applied to external data from $\pi_\omega$ (\textbf{Lemma~\ref{lemma:is_proxy_bias}}). Conversely, the theoretically correct off-policy estimator, using weights $r^e_t(\theta) = \frac{\pi_\theta(e_t|e_{<t})}{\pi_\omega(e_t|e_{<t})}$, suffers from support mismatch of $\pi_\theta$ (\textbf{Lemma~\ref{lemma:is_support_bias}}) and prohibitively high variance as the policies diverge (\textbf{Lemma~\ref{Variance of IS Ratio}}), which destabilizes training. This issue is compounded by the fact that $\pi_\omega$ is usually unknown, rendering direct weight computation infeasible.

To solve this, we introduce Multiple Importance Sampling to construct an estimator with lower variance and controllable bias. Instead of directly estimating $\pi_\omega$, we treat the generation of an external sample as arising from a mixture policy composed of the previous policy $\pi_{\theta_{old}}$ and the external policy $\pi_\omega$. Therefore, the Multiple Importance Sampling of each token can be defined as:
\begin{equation}
    r_{i,t}^m(\theta) = \frac{2\pi_\theta(e_{i,t} | q, e_{i,<t})}{\pi_\omega(e_{i,t} | q, e_{i,<t}) + \pi_{\theta_{old}}(e_{i,t} | q, e_{i,<t})},
\end{equation}
where $e_{i,t}$ is the $t$-th token in the external data trajectory $e_i$. It replaces the aforementioned \textit{explosive bias from poor proxy or support mismatch} with a controlled, bounded distortion error (\textbf{Remarks \ref{Controlled Bias} and \ref{Overcoming Support Mismatch}}), making the overall MIS estimator robust for stable learning from external data.
The formal denominator acts as a crucial \textit{variance guardrail}. The presence of $\pi_{\theta_{\text{old}}}$, which is intentionally kept close to $\pi_\theta$, prevents the ratio from exploding even if $\pi_\omega$ is highly dissimilar, ensuring the estimator's variance remains bounded.

\begin{theorem}[Variance Robustness of MIS]
So long as there is at least one policy in the behavior pool $\{\pi_{\beta_k}\}$ (e.g., $\pi_{\beta_k^*}$) that is a good approximation of the target policy $\pi_\theta$ (i.e., $\pi_{\beta_k^*} \approx \pi_\theta$), the variance of the MIS estimator will be low. The estimator is insensitive to other arbitrarily "bad" behavior policies in the pool. (See Proof in Appendix \ref{Variance Advantage})
\end{theorem}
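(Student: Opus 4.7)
The plan is to establish variance robustness by leveraging the self-normalizing structure of the balance heuristic: the sum of behavior densities in the denominator is lower-bounded by any single term, so a single good approximator in the pool controls the ratio pointwise, irrespective of how pathological the remaining behavior policies are.

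First, I would write the generalized MIS ratio for a behavior pool $\{\pi_{\beta_k}\}_{k=1}^K$ as $r^m(\theta) = K\pi_\theta(x) / \sum_{k=1}^K \pi_{\beta_k}(x)$, and invoke the elementary fact $\sum_{k=1}^K \pi_{\beta_k}(x) \geq \pi_{\beta_{k^*}}(x)$ for any fixed index $k^*$. This single inequality is the crux of the argument: it shows that the other $\pi_{\beta_k}$ can only enlarge the denominator, never shrink it, which is precisely the mechanism by which the estimator becomes insensitive to arbitrarily bad behavior policies. Combining these gives the pointwise bound $r^m(\theta) \leq K\pi_\theta(x)/\pi_{\beta_{k^*}}(x)$.

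Next, I would formalize ``$\pi_{\beta_{k^*}} \approx \pi_\theta$'' as a bounded density ratio, e.g., $\pi_\theta(x)/\pi_{\beta_{k^*}}(x) \leq 1+\delta$ for some small $\delta \geq 0$ on the relevant support. Substituting yields $r^m(\theta) \leq K(1+\delta)$ uniformly in $x$. I would then apply the standard bound $\mathrm{Var}(X) \leq \mathbb{E}[X^2] \leq \|X\|_\infty^2$ (or equivalently Popoviciu's inequality for bounded random variables) to conclude that $\mathrm{Var}(r^m(\theta)) \leq K^2(1+\delta)^2$. Crucially, this bound depends only on the quality of the best approximator and the pool size $K$, and is entirely decoupled from the bad behavior policies in the pool, which is exactly the robustness claim.

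The main obstacle is choosing the right formalization of ``good approximation.'' A pointwise density-ratio bound as above is the cleanest route, but it may be overly restrictive on the tails where $\pi_\theta$ is small. A softer alternative is to assume a bounded $\chi^2$-divergence $\chi^2(\pi_\theta \,\|\, \pi_{\beta_{k^*}}) \leq \delta$, and work with $\mathbb{E}_{\pi_{\mathrm{mix}}}[(r^m)^2]$ directly, where $\pi_{\mathrm{mix}} = \tfrac{1}{K}\sum_k \pi_{\beta_k}$ is the mixture from which samples effectively arise; the denominator inequality then passes unchanged through the expectation, yielding a second-moment bound in terms of $\delta$ alone. Either route keeps the key structural insight intact, namely that denominator-dominance by the good term is an essentially algebraic observation, so I do not expect subtle analytic obstructions beyond making the approximation assumption quantitative.
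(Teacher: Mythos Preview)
Your proposal is correct and follows essentially the same approach as the paper: both arguments lower-bound the mixture denominator by the single good term $\pi_{\beta_{k^*}}$, then use $\pi_{\beta_{k^*}} \approx \pi_\theta$ to obtain a pointwise bound on the MIS weight that depends only on the pool size (or mixture weight $\alpha_{k^*}$) and the quality of the best approximator, independent of the other behavior policies. Your version is in fact more quantitative---with an explicit $(1+\delta)$ density-ratio assumption and resulting variance bound $K^2(1+\delta)^2$---whereas the paper's proof remains qualitative, but the underlying mechanism is identical.
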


A key challenge remains: the behavior policy $\pi_\omega$ is unknown. We require a robust method to estimate it. Instead of naively using a proxy, we derive an estimator for $\pi_\omega$ from a principled Bayesian perspective. We frame the estimation as a decision problem where we must balance our belief in our best available model, $\pi_{\theta_{\text{old}}}$, against a state of maximal uncertainty, represented by a non-informative uniform policy $\mathcal{U}$. This allows us to hedge against model error, leading to the following Bayes-optimal estimator.

\begin{theorem}[Bayes-Optimal Policy Estimator]
Let the model space for the unknown behavior policy $\pi_\omega$ be composed of two candidate models: (1) The specific proxy policy, $\pi_{\theta_{\text{old}}}$, representing our available, specific information. (2) A non-informative uniform policy, $\mathcal{U}(\tau)$, representing maximal uncertainty.
Let the trajectory space $\mathcal{T}$ have a finite volume $V = \int_\mathcal{T} d\tau$, such that $\mathcal{U}(\tau) = 1/V$.
Under the Principle of Indifference, we assign equal prior probabilities to these models, i.e., $P(\pi_\omega = \pi_{\theta_{\text{old}}}) = P(\pi_\omega = \mathcal{U}) = 1/2$. Then, the estimator $\hat{\pi}_\omega$ that minimizes the Bayes risk (expected L2 error) is the Bayesian model average: $\hat{\pi}_\omega^*(\tau) = \frac{1}{2} \pi_{\theta_{\text{old}}}(\tau) + \frac{1}{2} \mathcal{U}(\tau)$
(See Proof in Appendix \ref{Optimal Bayesian Estimation})
\end{theorem}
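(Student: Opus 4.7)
The plan is to recognize this as a classical Bayesian decision-theoretic problem: minimizing expected squared error yields the posterior mean estimator, and in the present setting (no observation updates the prior over models), the posterior mean reduces to the prior-weighted average. First I would formalize the Bayes risk of an arbitrary estimator $\hat{\pi}_\omega$ as
\begin{equation}
R(\hat{\pi}_\omega) = \mathbb{E}_{\pi_\omega \sim P}\bigl[\|\hat{\pi}_\omega - \pi_\omega\|_2^2\bigr] = \tfrac{1}{2}\|\hat{\pi}_\omega - \pi_{\theta_{\text{old}}}\|_2^2 + \tfrac{1}{2}\|\hat{\pi}_\omega - \mathcal{U}\|_2^2,
\end{equation}
where the $L^2$ norm is taken over the trajectory space $\mathcal{T}$ equipped with its Lebesgue/counting measure, and the discrete prior is supplied by the Principle of Indifference.

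Next I would perform a pointwise variational argument: for each fixed $\tau \in \mathcal{T}$, the integrand of the risk is a simple quadratic in the scalar $\hat{\pi}_\omega(\tau)$, namely $\tfrac{1}{2}(\hat{\pi}_\omega(\tau) - \pi_{\theta_{\text{old}}}(\tau))^2 + \tfrac{1}{2}(\hat{\pi}_\omega(\tau) - \mathcal{U}(\tau))^2$. Taking a derivative in $\hat{\pi}_\omega(\tau)$ and setting it to zero gives $\hat{\pi}_\omega^*(\tau) = \tfrac{1}{2}\pi_{\theta_{\text{old}}}(\tau) + \tfrac{1}{2}\mathcal{U}(\tau)$, and the positive second derivative confirms this is a minimum. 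Equivalently, one may invoke the general identity that the minimizer of $\mathbb{E}[(X-c)^2]$ over $c$ is $\mathbb{E}[X]$, applied pointwise to the random function $\pi_\omega$.

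I would then verify that the candidate $\hat{\pi}_\omega^*$ is itself a valid probability density on $\mathcal{T}$: non-negativity is immediate since both components are non-negative, and integration gives $\tfrac{1}{2}\cdot 1 + \tfrac{1}{2}\cdot V\cdot(1/V) = 1$, using the finite-volume assumption $V = \int_\mathcal{T} d\tau$ that makes $\mathcal{U}(\tau)=1/V$ a proper density. Finally, I would substitute $\hat{\pi}_\omega^*$ back into $R(\cdot)$ to record that the attained minimum Bayes risk is $\tfrac{1}{4}\|\pi_{\theta_{\text{old}}} - \mathcal{U}\|_2^2$, which quantifies the unavoidable hedging cost.

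The main obstacle, in my view, is not the optimization itself (which is elementary once the risk is written in quadratic form) but the measure-theoretic care needed to ensure the pointwise minimization is legitimate and that $\|\cdot\|_2$ is well-defined on the policy function space; in particular, the finite-volume assumption on $\mathcal{T}$ is essential both for the uniform policy to be normalizable and for the $L^2$ norms to be finite for reasonable policies. A secondary subtlety is justifying the interchange of the expectation over $\pi_\omega$ (a discrete two-point measure) with the integral over $\tau$, which follows from Fubini–Tonelli given the non-negativity of the integrand.
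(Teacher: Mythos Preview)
Your proposal is correct and follows essentially the same approach as the paper: write the Bayes risk as the prior-weighted sum of squared $L^2$ errors, minimize the integrand pointwise by differentiating in $\hat{\pi}_\omega(\tau)$, and then verify that the resulting mixture is a valid density. Your additional remarks on the second-derivative check, the Fubini--Tonelli justification, and the explicit value $\tfrac{1}{4}\|\pi_{\theta_{\text{old}}}-\mathcal{U}\|_2^2$ of the attained risk go slightly beyond what the paper records, but they do not change the argument.
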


\subsection{Efficient Exploration with Exploration-Based Advantage Function}

Merely incorporating external data stably is insufficient; we must also guide the model to focus on its most valuable information, especially the "new knowledge" that the model is unlikely to discover on its own. Models tend to favor high-probability tokens, whereas novel knowledge is often embedded in correct reasoning paths that the model considers to have low probability.

To this end, we design an Exploration-Based Advantage Function, $A^c_{i,t}$, which prioritizes encouraging the model to explore reasoning steps that are correct but hard to explore, which can be defined as:

\begin{equation}
A^c_{i,t} = \frac{R_i - \text{mean}(\{R_1, R_2, \dots, R_{G}\})}{\text{std}(\{R_1, R_2, \dots, R_{G}\})} \cdot C_{i,t}    
\end{equation}

The first term is the standardized reward for all trajectories, including both internal exploration and external data, and the second term is the weight to encourage exploration. Inspired by focal loss \citep{focalloss}, we define the weight $C_{i,t}$ as:

\begin{equation}
    \label{Cgamma}
    C_{i,t} = (1 - \text{detach}(\pi_\theta(e_{i,t} | q, e_{i,<t})))^\gamma, 
\end{equation}
where $\pi_\theta(e_{i,t} | q, e_{i,<t})$ represents the model's exploration probability in the correct token $e_{i,t}$ from the external data. When it is hard to explore (i.e., $\pi_\theta$ is small), the weight $C_{i,t}$ becomes large, amplifying the advantage signal for that timestep and compelling the model to attend to this overlooked region. $\gamma$ is a hyperparameter to control $C_{i,t}$. The `detach' function is a standard operation in Torch that prevents gradients from backpropagating through the probability calculation, which enhances training stability. 

\subsection{The Composite RL-PLUS Objective}

To synergize internal exploitation $\mathcal{D}_o$ with external data $\mathcal{D}_e$, we formulate the final training objective of RL-PLUS as a composite function $\mathcal{J}_{\text{RL-PLUS}}(\theta)$, which is defined as:

\begin{align}
    \mathcal{J}_{\text{RL-PLUS}}(\theta) &= \underbrace{\mathbb{E}_{(o_i, A_{i}) \sim \mathcal{D}_o} \left[ r_{i,t}(\theta) A_{i} \right]}_{\text{Internal Exploitation (Thinking)}} + \underbrace{\mathbb{E}_{(e_i, A_{i,t}^c) \sim \mathcal{D}_e} \left[ r_{i,t}^m(\theta) A_{i,t}^c \right]}_{\text{External data for Exploration (Learning)}}
\end{align}
where the first term represents the standard policy gradient objective, which is responsible for stabilizing and improving upon the model's existing reasoning capabilities. The second term constitutes the core of our contribution, which drives the policy to external exploration. It leverages our two primary innovations: 1) Multiple Importance Sampling  $r_{i,t}^m(\theta)$, which provides a low-variance, robust mechanism for integrating external data, and 2) Exploration-Based Advantage Function $A_{i,t}^c$, which re-weights the learning signal to prioritize novel yet high-value reasoning paths. 

Moreover, we omit the clipping mechanism (e.g., $\text{clip}(r_t(\theta), 1-\epsilon, 1+\epsilon)$), which would suppress the gradient signals corresponding to highly informative, low-probability events, i.e., the ``new knowledge" we aim to acquire. By removing this constraint, RL-PLUS is empowered to take larger, more assertive optimization steps when it encounters valuable information in the external data, thus accelerating the assimilation of novel knowledge and more effectively expanding its capability boundaries in RLVR.

\begin{table*}[t]
\centering
\caption{Performance of \ourapproach~against other baselines, where the best-performing result for each benchmark is highlighted in \textbf{bold} and the base model is Qwen2.5-Math-7B for all methods. }
\label{tab:main_performance}
\resizebox{\textwidth}{!}
{
\begin{tabular}{lccccccr}
\toprule
\textbf{Method} & \textbf{AIME 24} & \textbf{AIME 25} & \textbf{AMC} & \textbf{MATH-500} & \textbf{Minerva} & \textbf{Olympiad} & \textbf{Avg.} \\
\midrule
Qwen2.5-Math-7B & 11.5 & 4.9 & 31.3 & 43.6 & 7.4 & 15.6 & 19.0 \\
Qwen2.5-Math-7B-Instruct & 12.5 & 10.2 & 48.5 & 80.4 & 32.7 & 41.0 & 37.6 \\
\midrule
SimpleRL\scriptsize{~\citep{Simplerl}} & 27.0 & 6.8 & 54.9 & 76.0 & 25.0 & 34.7 & 37.4 \\
OpenReasoner\scriptsize{~\citep{Open-reasoner-zero}} & 16.5 & 15.0 & 52.1 & 82.4 & 33.1 & 47.1 & 41.0 \\
PRIME\scriptsize{~\citep{PRIME_Zero}} & 17.0 & 12.8 & 54.0 & 81.4 & 39.0 & 40.3 & 40.7 \\
Oat\scriptsize{~\citep{oat-zero}} & 33.4 & 11.9 & 61.2 & 78.0 & 34.6 & 43.4 & 43.7 \\
\hdashline
DAPO\scriptsize{~\citep{DAPO}} & 23.4 & 15.5 & 66.3 & 86.0 & 40.1 & 49.6 & 46.8 \\
TAPO\scriptsize{~\citep{TAPO}} & 33.3 &  18.6  & 77.5 & 83.4 & 38.2 & 46.2 & 49.5    \\
LUFFY\scriptsize{~\citep{LUFFY}} & 29.4 & 23.1 & 65.6 & 87.6 & 37.5 & 57.2 & 50.1 \\
ReLIFT\scriptsize{~\citep{ReLIFT}}       & 28.4 & 21.8 & 64.3 & 86.8 & 40.1 & 54.8 & 49.4 \\
\midrule
SFT & 22.2 & 22.3 & 52.8 & 82.6 & 40.8 & 43.7 & 44.1 \\
GRPO\scriptsize{~\citep{DeepSeekMath}} & 25.1 & 15.3 & 62.0 & 84.4 & 39.3 & 46.8 & 45.5 \\
GRPO w/ SFT Loss & 19.5 & 16.4 & 49.7 & 80.4 & 34.9 & 39.4 & 40.1 \\
SFT+GRPO & 25.8 & 23.1 & 62.7 & 87.2 & 39.7 & 50.4 & 48.2 \\
\hdashline
\textbf{RL-PLUS} & \textbf{33.4} & \textbf{25.9} & \textbf{68.1} & \textbf{90.2} & \textbf{43.8} & \textbf{58.8} & \textbf{53.4} \\
\bottomrule
\end{tabular}}
\end{table*}
\section{Experimental Results}
In this section, we conduct extensive experiments to demonstrate the effectiveness and generalization of RL-PLUS. Detailed experimental setups can be found in Appendix. 

\paragraph{Performence of \ourapproach.}

As shown in Table \ref{tab:main_performance}, RL-PLUS comprehensively outperforms existing RLVR methods across all evaluated applications, achieving SOTA performance. A comparison with several straightforward baselines clearly demonstrates the benefits of RL-PLUS. SFT can be viewed as a means of learning from external knowledge, while GRPO enables the model to explore solutions on its own through reinforcement learning. The combined ``SFT+GRPO" approach yields synergistic gains, illustrating the value of integrating both external knowledge and self-exploration. However, the ``GRPO w/ SFT Loss" baseline, which simply adds an SFT loss to the RL training, shows a decline in performance. This suggests that effectively merging these two learning paradigms is a non-trivial challenge. RL-PLUS significantly improves upon ``SFT+GRPO" by an average of +5.2 points, showcasing a more potent strategy for this integration.
Furthermore, when compared to concurrent methods like LUFFY and ReLIFT, which also incorporate external examples into their training process in some form, RL-PLUS also achieves superior performance, which indicates that RL-PLUS offers a more effective way for learning from external knowledge.

\begin{table*}[h]
\centering
\caption{Out-of-Distribution performance on programming tasks (i.e., HumanEval, LiveCodeBench, LeetCode) and science QA (i.e., ARC-c, GPQA-diamond, MMLU-Pro).}
\resizebox{\textwidth}{!}
{%
\begin{tabular}{lccccccc}
\toprule
\textbf{Method} & \textbf{HumanEval} & \textbf{LeetCode} & \textbf{LiveCodeBench} & \textbf{ARC-c} & \textbf{GPQA-diamond} & \textbf{MMLU-Pro} & \textbf{Avg.} \\
\midrule
Base Model & 42.1 & 22.8 & 14.9 & 18.2 & 13.1 & 30.2 & 23.6 \\
SFT & 55.5 & 8.3 & 8.1 & 75.2 & 24.7 & 42.7 & 35.8\\
GRPO & 63.4 & 21.1 & 15.3 & 81.7 & 40.4 & 47.5 & 44.9 \\
SFT+GRPO & 59.8 & 8.34 & 9.7 & 72.4 & 24.2 & 37.7 & 35.4 \\ 
\hdashline
\textbf{RL-PLUS} & \textbf{68.3} & \textbf{27.8} & \textbf{19.2} & \textbf{82.3} & \textbf{40.4} & \textbf{54.7} & \textbf{48.8} \\
\bottomrule
\end{tabular}%
}\label{tab:OOD_results}
\end{table*}

\paragraph{Performance on OOD Tasks.}

The results on OOD tasks are presented in Table \ref{tab:OOD_results}, which show that RL-PLUS achieves substantial improvements over all baselines, including the mainstream method SFT+GRPO. It surpasses the next best baseline by an average of +3.9 points. This indicates that \text{\ourapproach} not only enhances capabilities within a specific domain but also develops more fundamental reasoning abilities that generalize to other domains.
In the domain of science QA, RL-PLUS consistently outperforms both GRPO and SFT+GRPO across all benchmarks. More notably, under a significant domain shift to programming tasks, our approach maintains its strong performance and advantage. In contrast, the performance of SFT and SFT+GRPO deteriorates significantly in this area. Considering this alongside the in-domain results from Table \ref{tab:main_performance}, a clear pattern emerges: while SFT-based methods provide a strong boost for in-domain tasks, they fail to generalize and perform worse than RL-based methods in OOD scenarios. RL-PLUS resolves this trade-off. By effectively merging the external knowledge acquisition of SFT with the robust generalization of RL, it achieves superior performance in both in-domain and out-of-distribution settings, outclassing methods reliant on either paradigm alone.

\paragraph{Training Dynamics.}

In Figure \ref{fig:training_dynamics2}, we present the training dynamics of our proposed method and baselines on various benchmarks. As illustrated, RL-PLUS consistently outperforms the alternatives in terms of test accuracy and rewards throughout the training process. Notably, RL-PLUS continues to show a clear upward trend in performance even after the baselines have plateaued. We further analyze the changes in actor entropy during training. We observe that directly incorporating external data during rollouts (the green line in Figure \ref{fig:training_dynamics2}) leads to an ``entropy explosion", causing the model's outputs to become chaotic. In contrast, the entropy of the baseline models collapses to nearly zero over the course of training, indicating a loss of exploratory capability. The entropy of RL-PLUS, however, does not diminish to zero, which suggests that our trained model retains a considerable capacity for exploration.
Prior research~\citep{entropy_mechanism} has established that policy performance is achieved at the cost of policy entropy, and the depletion of entropy marks the upper limit of performance. This implies that RL-PLUS still possesses potential for further improvement. Additionally, the response length can reflect the test-time scaling performance of a method. The steadily increasing response length of RL-PLUS is the indicator of a healthy and robust training state. In contrast, while directly incorporating external data also leads to long response lengths, its low accuracy and high policy entropy suggest that this length stems from unproductive exploration rather than meaningful reasoning.

\begin{table*}[h]
\centering
\caption{The performance of \ourapproach~based on Different LLMs.}
\label{tab:various_LLMs}
\resizebox{\textwidth}{!}
{
\begin{tabular}{lccccccr}
\toprule
\textbf{Model} & \textbf{AIME 24} & \textbf{AIME 25} & \textbf{AMC} & \textbf{MATH-500} & \textbf{Minerva} & \textbf{Olympiad} & \textbf{Avg.} \\
\midrule
\textbf{LLaMA-3.1-8B} & 4.7 & 0.4 & 18.5 & 46.4 & 19.8 & 13.2 & 17.2 \\
SFT & 2.6 & 0.9 & 29.8 & 50.0 & 21.3 & 16.9 & 20.2 \\
GRPO & 3.5 & 0.5 & 19.5 & 45.0 & 20.2 & 14.2 & 17.2 \\
RL-PLUS & \textbf{11.7} & \textbf{2.1} & \textbf{35.5} & \textbf{64.4} & \textbf{29.4} & \textbf{31.2} & \textbf{29.1} \\
\hdashline
\textbf{Deepseek-Math-7B}  & 1.1 & 0.3 & 14.5 & 40.4 & 18.8 & 10.7 & 14.3 \\
SFT & 3.8 & 0.3 & 23.3 & 51.2 & 21.3 & 19.8 & 19.9 \\ 
GRPO & 2.5 & 0.2 & 17.3 & 47.0 & 20.9 & 14.5 & 17.1 \\  
RL-PLUS & \textbf{4.1} & \textbf{0.4} & \textbf{25.0} & \textbf{54.8} & \textbf{21.7} & \textbf{21.4} & \textbf{21.3} \\

\hdashline
\textbf{Qwen2.5-Math-1.5B} & 7.2 & 3.6 & 26.4 & 28.0 & 9.6 & 21.2 & 16.0 \\
SFT & 11.7 & 13.2 & 37.8 & 70.6 & 26.8 & 31.3 & 31.9 \\
GRPO & 11.8 & 7.7 & 40.2 & 61.8 & 26.8 & 32.0 & 30.1 \\
RL-PLUS & \textbf{20.4} & \textbf{13.6} & \textbf{50.0} & \textbf{80.4} & \textbf{33.1} & \textbf{45.2} & \textbf{40.5} \\

\hdashline
\textbf{Qwen2.5-Math-7B} & 11.5 & 4.9 & 31.3 & 43.6 & 7.4 & 15.6 & 19.0 \\
SFT & 22.2 & 22.3 & 52.8 & 82.6 & 40.8 & 43.7 & 44.1 \\
GRPO & 25.1 & 15.3 & 62.0 & 84.4 & 39.3 & 46.8 & 45.5 \\
RL-PLUS & \textbf{33.4} & \textbf{25.9} & \textbf{68.1} & \textbf{90.2} & \textbf{43.8} & \textbf{58.8} & \textbf{53.4} \\
\bottomrule
\end{tabular}}
\end{table*}

\begin{figure*}[t!] 
    \centering 
    \includegraphics[width=1.01\textwidth]{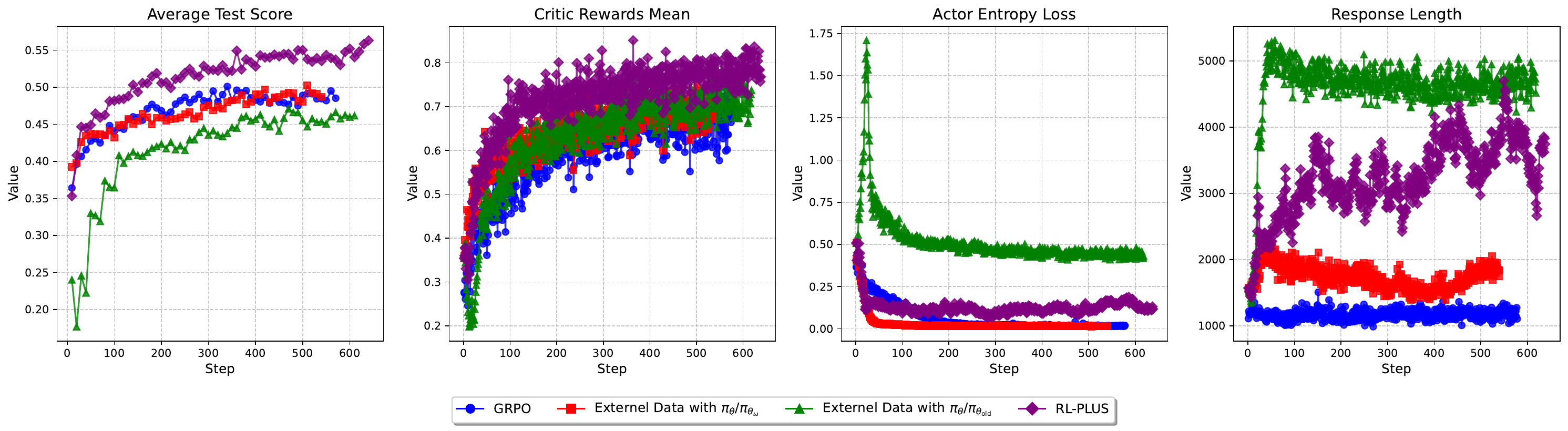} 
    \caption{Training dynamics of \ourapproach~and other baselines.} 
    \label{fig:training_dynamics2} 
\end{figure*}

\paragraph{Application on Various LLMs.}

To validate the applicability of RL-PLUS on various LLMs, we conduct experiments on several mainstream open-source LLMs, including LLaMA-3.1-8B, Deepseek-Math-7B, and the 1.5B and 7B versions of Qwen2.5-Math. The detailed results are presented in Table \ref{tab:various_LLMs}. The results indicate that RL-PLUS achieves comprehensively superior performance, regardless of the base model. Notably, on Qwen2.5-Math-7B model, RL-PLUS elevates the average score to 53.4, significantly outperforming the base model of 9.0 and other methods such as SFT of 44.1 and GRPO of 45.5. Furthermore, on LLaMA-3.1-8B, where methods like GRPO struggled to yield improvements, RL-PLUS successfully trained the model to achieve an absolute gain of 11.9 points. These findings provide evidence that RL-PLUS can consistently enhance LLMs of varying architectures and scales, significantly boosting their reasoning capabilities. 

\paragraph{Acquiring Reasoning Abilities Beyond Base Model.}

\begin{figure*}[h!]
    \centering
    \includegraphics[width=\textwidth]{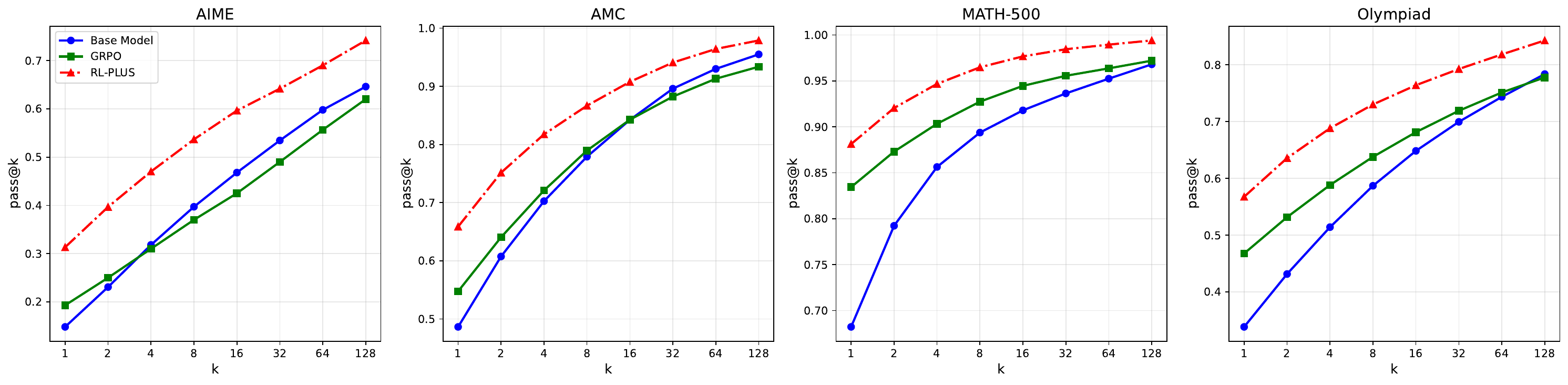}
    \caption{Pass@k curves of \ourapproach~compared with baselines across multiple benchmarks.}
    \label{fig:acquiring_reasoning_abilities}
\end{figure*}

The fundamental goal of incorporating an external policy into the RL-PLUS method is to expand the model's capability boundary by continuously introducing knowledge. Following the experimental setup of \citep{ReasoningCapacity}, we test whether RL-PLUS acquires superior reasoning abilities relative to the base model.
Figure \ref{fig:acquiring_reasoning_abilities} displays the pass@k performance curves for different methods across multiple tasks. A clear trend is observable where the performance curve of the GRPO method gradually converges with that of the base model as k increases. In some instances, GRPO's performance even drops below the base model at larger k-values, a finding consistent with that of \citep{ReasoningCapacity}.
In contrast, our approach maintains a consistent performance advantage over both the base model and GRPO as k-values increase. This sustained outperformance provides strong evidence that RL-PLUS effectively breaks through the capability boundary of the base model, rather than merely optimizing performance within its inherent ability range. On the AMC and MATH-500 tasks, the accuracy of RL-PLUS eventually plateaus because its performance is approaching the maximum possible score of 1.0.

\begin{table*}[h]
\centering
\caption{Ablation Study of RL-PLUS.}
\label{tab:ablation_variants_updated}
\resizebox{\textwidth}{!}
{
\begin{tabular}{lccccccc}
\toprule
\textbf{Method} & \textbf{AIME 24} & \textbf{AIME25} & \textbf{AMC} & \textbf{MATH-500} & \textbf{Minerva} & \textbf{Olympiad} & \textbf{Avg.} \\
\midrule
Variants with External Data\\
$\pi_\theta/\pi_{\theta_{\text{old}}}$& 19.6 & 14.8 & 55.1 & 81.0 & 33.5 & 46.2 & 41.7 \\
$\pi_\theta/\pi_{\theta_\omega}$  & 25.8 & 16.3 & 59.9 & 83.8 & 32.4 & 49.3 & 44.6 \\
$\pi_\theta/\pi_{\theta_\omega}$ with Our Policy Estimation & 26.1 & 19.2 & 62.3 & 86.8 & 38.6 & 52.0 & 47.5\\ 
\hdashline
\textbf{RL-PLUS} & \textbf{33.4} & \textbf{25.9} & \textbf{68.1} & \textbf{90.2} & \textbf{43.8} & \textbf{58.8} & \textbf{53.4} \\
- Exploration-Based Advantage Function & 28.3 & 24.1 & 67.8 & 88.8 & 40.4 & 56.0 & 50.9 \\
- Multiple Importance Sampling & 25.1 & 15.3 & 62.0 & 84.4 & 39.3 & 46.8 & 45.5 \\
\bottomrule
\end{tabular}}
\end{table*}

\paragraph{Ablation Study.}

To analyze the sources of \ourapproach's effectiveness, we conduct a series of ablation studies, with the results presented in Table \ref{tab:ablation_variants_updated}. We first ablate the two core components of our approach: Multiple Importance Sampling and Exploration-Based Advantage Function. The experimental results show that removing Exploration-Based Advantage Function causes the model's average performance to decrease from 53.4 to 50.9, which demonstrates the importance of efficient exploration for reinforcement learning. Furthermore, removing Multiple Importance Sampling leads to a more significant performance degradation, with the average score dropping substantially to 45.5, highlighting the significance of incorporating external knowledge.
Additionally, we compare our method against three naive approaches for integrating external knowledge. The first variant approximates the external policy $\pi_{\theta_\omega}$ using the old policy $\pi_{\theta_{\text{old}}}$. The second variant, an approach also seen in LUFFY~\citep{LUFFY}, approximates the external policy's probability as 1, treating it as a perfect oracle. When using our policy estimation as the external policy, i.e., the third variant, the performance improves by 2.9 points, demonstrating the effectiveness of our policy estimation. Due to the improper integration methods, these variants all show a significant performance gap compared to RL-PLUS. 

\section{Training Stability of RL-PLUS}

To validate the training stability of RL-PLUS, we extended the number of training steps on the Qwen2.5-Math-1.5B model to \textbf{over 10 times} the original setup. As shown in Figure \ref{fig:long_train}, the model's key metrics demonstrate excellent stability and continuous performance improvement as training progresses. Specifically, the Average Test Score and Critic Rewards Mean both show a steady upward trend, while the Actor Entropy Loss rapidly converges and stabilizes in a healthy, non-zero range. This reveals an ideal balance: the model's policy, while becoming more effective (i.e., exploitation), also maintains the necessary policy stochasticity for exploration, thus avoiding premature convergence to a local optimum. These results strongly demonstrate that the RL-PLUS framework possesses outstanding training stability and has the potential for further performance gains through extended training.

\begin{figure*}[h!] 
    \centering 
    \includegraphics[width=1.01\textwidth]{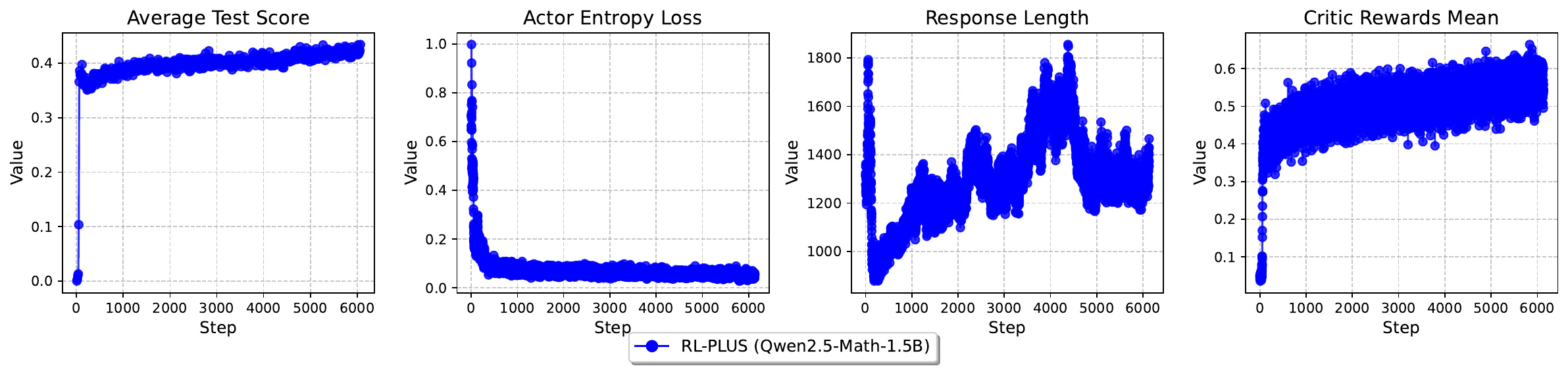} 
    \caption{Training Stability of \ourapproach.} 
    \label{fig:long_train} 
\end{figure*}

\section{Conclusion}

In this paper, we proposed RL-PLUS, a novel hybrid-policy optimization approach designed to counter the ``capability boundary collapse" observed in LLMs trained with RLVR. RL-PLUS addresses this problem by synergizing external data with internal exploitation through two core components: Multiple Importance Sampling to resolve distributional mismatch from external data, and Exploration-Based Advantage Function to incentivize the discovery of correct yet low-probability reasoning paths. We provide both theoretical analysis and extensive experiments to demonstrate the superiority and generalizability of RL-PLUS. Notably, Pass@k curves and training dynamics demonstrate that our method breaks through the reasoning capability boundary of base model, leading to further performance improvements.

\bibliography{preprint}
\bibliographystyle{preprint}

\appendix
\newpage
\onecolumn
\appendix

\section{Theoretical Analysis of Multiple Importance Sampling }
We provide a rigorous theoretical analysis of the Multiple Importance Sampling (MIS) estimator for policy optimization. First, we dissect the bias and variance issues inherent to standard Importance Sampling (IS) when using data from a single behavior policy. Subsequently, we prove that the MIS estimator is unbiased and analyze its superior variance properties. We show that MIS is robust to the inclusion of suboptimal behavior policies, establishing it as a powerful tool for integrating diverse data sources in policy optimization.

\subsection{Preliminaries and Core Assumptions}

Our analysis is based on the following standard settings and assumptions. Let the objective function be $J(\theta) = \E_{\tau \sim \pi_\theta}[R(\tau)]$, where $\tau$ represents a complete trajectory, $R(\tau)$ is its corresponding cumulative return, and $\pi_\theta$ is the target policy we aim to optimize.

\begin{assumption}[Joint Support Coverage]
The support of the target policy $\pi_\theta$ is covered by the union of the supports of all behavior policies $\{\pi_{\beta_k}\}_{k=1}^K$. Formally,
$$ \supp(\pi_\theta) \subseteq \bigcup_{k=1}^{K} \supp(\pi_{\beta_k}) $$
This assumption ensures that any trajectory possible under $\pi_\theta$ can be sampled with a non-zero probability by at least one behavior policy.
\end{assumption}

\begin{assumption}[Bounded Rewards]
The trajectory returns are bounded, i.e., for all trajectories $\tau$, there exists a constant $R_{\max}$ such that $|R(\tau)| \le R_{\max} < \infty$. This ensures that all expectations and variances are well-defined.
\end{assumption}

\subsection{Analysis of Bias and Variance in Single-Strategy Importance Sampling}

When learning from data generated by a single external behavior policy $\pi_\omega$, the standard IS estimator can suffer from bias and variance problems. We analyze three primary failure modes.

\subsubsection{Importance Sampling Estimators}

We formally define the estimators central to our analysis. We consider a dataset of $N$ trajectories.

\begin{definition}[Standard Importance Sampling (IS) Estimator]
When all data is sampled from a single behavior policy $\pi_\omega$ (i.e., $K=1, \pi_{\beta_1} = \pi_\omega$), the standard IS estimator for $J(\theta)$ is:
$$ \hat{J}_{\text{IS}}(\theta) = \frac{1}{N} \sum_{i=1}^N \frac{\pi_\theta(\tau_i)}{\pi_\omega(\tau_i)}R(\tau_i), \quad \text{where } \tau_i \sim \pi_\omega $$
\end{definition}

\begin{definition}[Proxy IS Estimator]
\label{def:proxy_is}
A biased variant of the IS estimator that uses a proxy policy $\pi_{\theta_{\text{old}}}$ in the denominator, while the data is sampled from a different policy $\pi_\omega$:
$$ \hat{J}_{\text{proxy}}(\theta) = \frac{1}{N} \sum_{i=1}^N \frac{\pi_\theta(\tau_i)}{\pi_{\theta_{\text{old}}}(\tau_i)}R(\tau_i), \quad \text{where } \tau_i \sim \pi_\omega $$
\end{definition}

\subsubsection{Bias from a Proxy}
In practice, to mitigate the high variance that occurs when the data-generating policy $\pi_\omega$ is far from the target policy $\pi_\theta$, one might be tempted to use a different policy, $\pi_{\theta_{\text{old}}}$, as the denominator for the importance ratio. This "proxy" policy is chosen to be closer to $\pi_\theta$ (e.g., a previous iterate of the policy). However, this introduces a systematic bias, as it violates the fundamental principle of importance sampling.

\begin{lemma}[Bias of the IS Estimator with a Proxy]
\label{lemma:is_proxy_bias}
Assume trajectory data $\tau_i$ is sampled from an external policy $\pi_\omega$, i.e., $\tau_i \sim \pi_\omega$. If we construct an estimator using a proxy policy $\pi_{\theta_{\text{old}}}$ in the denominator of the importance weight:
$$\hat{J}_{\text{proxy}}(\theta) = \frac{1}{N} \sum_{i=1}^N \frac{\pi_\theta(\tau_i)}{\pi_{\theta_{\text{old}}}(\tau_i)}R(\tau_i)$$
then this estimator is \textbf{biased} for the true objective $J(\theta)$ whenever the proxy policy $\pi_{\theta_{\text{old}}}$ is not identical to the true sampling policy $\pi_\omega$. The bias is given by:
\begin{equation}
    \mathcal{B}(\theta, \omega, \theta_{\text{old}}) \triangleq \E_{\pi_\omega}[\hat{J}_{\text{proxy}}(\theta)] - J(\theta) = \int \pi_\theta(\tau) R(\tau) \left( \frac{\pi_\omega(\tau)}{\pi_{\theta_{\text{old}}}(\tau)} - 1 \right) d\tau
\end{equation}
\end{lemma}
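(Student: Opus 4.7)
The plan is to establish the bias formula via a direct computation of the expectation of $\hat{J}_{\text{proxy}}(\theta)$ under the sampling distribution $\pi_\omega$. Because the $N$ trajectories $\tau_i$ are i.i.d.\ draws from $\pi_\omega$, linearity of expectation collapses the empirical average into a single-trajectory expectation, so the whole question reduces to evaluating $\E_{\tau \sim \pi_\omega}\!\left[\frac{\pi_\theta(\tau)}{\pi_{\theta_{\text{old}}}(\tau)} R(\tau)\right]$ and subtracting $J(\theta) = \E_{\tau \sim \pi_\theta}[R(\tau)]$. No gradient-of-log-probability identity is required; the argument is purely a change-of-measure calculation.

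First I would write the single-trajectory expectation as the integral $\int \pi_\omega(\tau) \cdot \frac{\pi_\theta(\tau)}{\pi_{\theta_{\text{old}}}(\tau)} R(\tau)\, d\tau$ and then regroup the integrand as $\pi_\theta(\tau) R(\tau) \cdot \frac{\pi_\omega(\tau)}{\pi_{\theta_{\text{old}}}(\tau)}$. This rewriting is the key algebraic move: it expresses the mean of the proxy estimator as the target integral $J(\theta)$ reweighted by the mismatch ratio $\pi_\omega/\pi_{\theta_{\text{old}}}$. Subtracting $J(\theta) = \int \pi_\theta(\tau) R(\tau)\, d\tau$ immediately yields
$$\mathcal{B}(\theta, \omega, \theta_{\text{old}}) = \int \pi_\theta(\tau) R(\tau) \left( \frac{\pi_\omega(\tau)}{\pi_{\theta_{\text{old}}}(\tau)} - 1 \right) d\tau,$$
as claimed. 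Notice that the formula does not depend on $N$, which is expected because the estimator is an empirical mean of i.i.d.\ terms.

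Next I would argue that this bias is generically non-zero whenever $\pi_{\theta_{\text{old}}} \neq \pi_\omega$. The integrand vanishes only at trajectories where $\pi_\omega(\tau) = \pi_{\theta_{\text{old}}}(\tau)$, so unless this equality holds almost everywhere on the effective support $\{\tau : \pi_\theta(\tau) R(\tau) \neq 0\}$, the bias does not cancel. To rule out accidental cancellation one can exhibit a minimal two-point trajectory space where $\pi_\omega$ and $\pi_{\theta_{\text{old}}}$ differ while $\pi_\theta$ and $R$ are non-trivial, confirming $\mathcal{B} \neq 0$ in general. This makes precise the intuition that a proxy in the denominator cannot correct a distribution mismatch induced by a \emph{different} sampling distribution; the proxy only cancels the sampling density it is actually paired with.

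The main obstacle, mild as it is here, is keeping the density manipulations rigorous. I need $\supp(\pi_{\theta_{\text{old}}}) \supseteq \supp(\pi_\omega)$ so that the weight $\pi_\theta/\pi_{\theta_{\text{old}}}$ is finite along every sampled trajectory, together with $\supp(\pi_\omega) \supseteq \supp(\pi_\theta)$ so that the bias integral is well-posed. Under Assumption~1 specialized to the single-behavior case $K=1$, $\pi_{\beta_1} = \pi_\omega$, together with Assumption~2 and the implicit requirement that the proxy covers $\pi_\omega$'s support, the bounded-reward hypothesis and Fubini's theorem justify all interchanges between integration and expectation; beyond this bookkeeping, no further machinery is needed.
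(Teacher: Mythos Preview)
Your proposal is correct and follows essentially the same route as the paper: compute $\E_{\tau\sim\pi_\omega}\!\left[\frac{\pi_\theta(\tau)}{\pi_{\theta_{\text{old}}}(\tau)}R(\tau)\right]$ as an integral, regroup the integrand as $\pi_\theta(\tau)R(\tau)\cdot\frac{\pi_\omega(\tau)}{\pi_{\theta_{\text{old}}}(\tau)}$, and subtract $J(\theta)$ to obtain the stated bias. Your added remarks on i.i.d.\ reduction, support conditions, and the non-cancellation argument are more careful than the paper's version but do not change the underlying method.
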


\begin{proof}
We compute the expectation of the proxy estimator $\hat{J}_{\text{proxy}}(\theta)$ under the true data distribution $\pi_\omega$. The expectation is taken with respect to $\tau \sim \pi_\omega$.
\begin{align*}
    \E_{\pi_\omega}[\hat{J}_{\text{proxy}}(\theta)] &= \E_{\tau \sim \pi_\omega} \left[ \frac{\pi_\theta(\tau)}{\pi_{\theta_{\text{old}}}(\tau)}R(\tau) \right] \\
    &= \int \pi_\omega(\tau) \frac{\pi_\theta(\tau)}{\pi_{\theta_{\text{old}}}(\tau)}R(\tau) d\tau
\end{align*}
This is the expected value that the estimator will yield. Crucially, because the sampling distribution $\pi_\omega(\tau)$ in the integral does not cancel with the denominator $\pi_{\theta_{\text{old}}}(\tau)$, this expression cannot be simplified to the true objective $J(\theta) = \int \pi_\theta(\tau)R(\tau)d\tau$.

The bias of this estimator is its expectation minus the true objective:
\begin{align*}
    \mathcal{B}(\theta, \omega, \theta_{\text{old}}) &= \E_{\pi_\omega}[\hat{J}_{\text{proxy}}(\theta)] - J(\theta) \\
    &= \int \pi_\omega(\tau) \frac{\pi_\theta(\tau)}{\pi_{\theta_{\text{old}}}(\tau)}R(\tau) d\tau - \int \pi_\theta(\tau)R(\tau) d\tau \\
    &= \int \left( \pi_\omega(\tau) \frac{\pi_\theta(\tau)}{\pi_{\theta_{\text{old}}}(\tau)}R(\tau) - \pi_\theta(\tau)R(\tau) \right) d\tau \\
    &= \int \pi_\theta(\tau) R(\tau) \left( \frac{\pi_\omega(\tau)}{\pi_{\theta_{\text{old}}}(\tau)} - 1 \right) d\tau
\end{align*}
The final expression for the bias is zero if and only if $\pi_\omega(\tau) = \pi_{\theta_{\text{old}}}(\tau)$ for all relevant trajectories. If the external data policy $\pi_\omega$ differs significantly from the proxy policy $\pi_{\theta_{\text{old}}}$, this ratio will deviate substantially from 1, leading to a large, systematic bias.
\end{proof}

\subsubsection{Bias from Support Mismatch}
Even when using the correct data-generating policy $\pi_\omega$ in the denominator, the standard IS estimator is biased if the support of $\pi_\omega$ does not fully cover the support of the target policy $\pi_\theta$.

\begin{lemma}[Bias of the Standard IS Estimator from Support Mismatch]
\label{lemma:is_support_bias}
When using data sampled from an external policy $\pi_\omega$ to estimate the objective $J(\theta)$, if the support condition $\supp(\pi_\theta) \not\subseteq \supp(\pi_\omega)$ is not met, the standard IS estimator $\hat{J}_{\text{IS}}(\theta) = \frac{1}{N} \sum_{i=1}^N \frac{\pi_\theta(\tau_i)}{\pi_\omega(\tau_i)}R(\tau_i)$ (where $\tau_i \sim \pi_\omega$) is biased. The bias relative to the true objective is:
\begin{equation}
    \mathcal{B}(\theta, \omega) \triangleq \E_{\pi_\omega}[\hat{J}_{\text{IS}}(\theta)] - J(\theta) = - \int_{\tau \in \supp(\pi_\theta) \setminus \supp(\pi_\omega)} \pi_\theta(\tau)R(\tau)d\tau
\end{equation}
\end{lemma}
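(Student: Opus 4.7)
The plan is to compute $\E_{\pi_\omega}[\hat{J}_{\text{IS}}(\theta)]$ directly, compare it against $J(\theta) = \int \pi_\theta(\tau) R(\tau)\, d\tau$, and read off the difference. First I would observe that since the $\tau_i$ are drawn i.i.d.\ from $\pi_\omega$, any $\tau$ with $\pi_\omega(\tau) = 0$ is almost surely never sampled, so the undefined ratio $\pi_\theta(\tau)/\pi_\omega(\tau)$ on that region is resolved by the standard convention of setting the integrand to zero there. Under this convention, the expectation restricts automatically to $\supp(\pi_\omega)$:
\begin{equation*}
\E_{\tau \sim \pi_\omega}\!\left[\frac{\pi_\theta(\tau)}{\pi_\omega(\tau)} R(\tau)\right] = \int_{\supp(\pi_\omega)} \pi_\omega(\tau) \cdot \frac{\pi_\theta(\tau)}{\pi_\omega(\tau)} R(\tau)\, d\tau = \int_{\supp(\pi_\omega)} \pi_\theta(\tau) R(\tau)\, d\tau.
\end{equation*}

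Next I would split the integration domain of $J(\theta)$ using the disjoint partition $\supp(\pi_\theta) = \bigl(\supp(\pi_\theta) \cap \supp(\pi_\omega)\bigr) \sqcup \bigl(\supp(\pi_\theta) \setminus \supp(\pi_\omega)\bigr)$, giving
\begin{equation*}
J(\theta) = \int_{\supp(\pi_\theta) \cap \supp(\pi_\omega)} \pi_\theta(\tau) R(\tau)\, d\tau + \int_{\supp(\pi_\theta) \setminus \supp(\pi_\omega)} \pi_\theta(\tau) R(\tau)\, d\tau.
\end{equation*}
Since $\pi_\theta$ vanishes outside $\supp(\pi_\theta)$, the displayed expectation equals its restriction to $\supp(\pi_\theta) \cap \supp(\pi_\omega)$, i.e.\ exactly the first term in the partition of $J(\theta)$. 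Subtracting $J(\theta)$ therefore cancels the intersection piece and leaves $\mathcal{B}(\theta,\omega) = - \int_{\supp(\pi_\theta) \setminus \supp(\pi_\omega)} \pi_\theta(\tau) R(\tau)\, d\tau$, which is the claimed identity.

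The only genuinely delicate point, and the one I would flag as the main obstacle, is the support-bookkeeping around the $0/0$ expression: writing $\int \pi_\omega(\tau) \cdot \pi_\theta(\tau)/\pi_\omega(\tau) \cdot R(\tau)\, d\tau$ over the entire trajectory space and cancelling $\pi_\omega$ naively would silently reintroduce the missing region and make the bias vanish. Being explicit that the expectation operator already restricts to $\supp(\pi_\omega)$ (because unsampled trajectories contribute probability zero to the expectation) is what preserves the asymmetry between the sampling support and the target support. Once that is pinned down, the remainder is a one-line set-algebra cancellation; the sign of the bias and its interpretation, namely that mass assigned by $\pi_\theta$ to trajectories unreachable under $\pi_\omega$ cannot be recovered by any amount of data, then follow immediately and motivate the need for a multi-policy estimator as in the rest of the appendix.
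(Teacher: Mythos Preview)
Your proposal is correct and follows essentially the same route as the paper: restrict the expectation to $\supp(\pi_\omega)$, cancel $\pi_\omega$ there, decompose $J(\theta)$ over the partition $\supp(\pi_\theta) = (\supp(\pi_\theta)\cap\supp(\pi_\omega)) \sqcup (\supp(\pi_\theta)\setminus\supp(\pi_\omega))$, and subtract. Your added remark about why naive cancellation over the whole space would spuriously eliminate the bias is a nice clarification that the paper's proof leaves implicit.
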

\begin{proof}
The expectation of the IS estimator is calculated as follows:
\begin{align*}
    \E_{\pi_\omega}[\hat{J}_{\text{IS}}(\theta)] &= \E_{\tau \sim \pi_\omega} \left[ \frac{\pi_\theta(\tau)}{\pi_\omega(\tau)}R(\tau) \right] \\
    &= \int_{\tau \in \supp(\pi_\omega)} \pi_\omega(\tau) \frac{\pi_\theta(\tau)}{\pi_\omega(\tau)}R(\tau)d\tau \\
    &= \int_{\tau \in \supp(\pi_\omega) \cap \supp(\pi_\theta)} \pi_\theta(\tau)R(\tau)d\tau
\end{align*}
The true objective $J(\theta)$ can be decomposed over the same domains:
\begin{align*}
    J(\theta) &= \int_{\tau \in \supp(\pi_\theta)} \pi_\theta(\tau)R(\tau)d\tau \\
    &= \int_{\tau \in \supp(\pi_\theta) \cap \supp(\pi_\omega)} \pi_\theta(\tau)R(\tau)d\tau + \int_{\tau \in \supp(\pi_\theta) \setminus \supp(\pi_\omega)} \pi_\theta(\tau)R(\tau)d\tau
\end{align*}
The bias is the difference between these two quantities. This term represents the expected return from trajectories possible under $\pi_\theta$ but not under $\pi_\omega$, and it is zero if and only if the support condition holds.
\end{proof}

\subsubsection{Variance Divergence of the Importance Ratio}

\begin{lemma}[Variance of the IS Ratio]
\label{Variance of IS Ratio}
Even if the support condition is satisfied, the variance of the importance ratio $r^\omega(\tau) = \frac{\pi_\theta(\tau)}{\pi_\omega(\tau)}$ can become extremely large when the target policy $\pi_\theta$ and behavior policy $\pi_\omega$ are dissimilar. Precisely, the variance is equal to the \textbf{Chi-squared divergence} between the two policies:
$$ \Var_{\pi_\omega}(r^\omega) = \chi^2(\pi_\theta, \pi_\omega) $$
\end{lemma}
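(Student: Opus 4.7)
The plan is to unfold the variance into its two moment components and recognize each in turn, with the second moment yielding (up to a constant) the Pearson $\chi^2$ divergence. First, I would write
\begin{equation}
\Var_{\pi_\omega}(r^\omega) = \E_{\pi_\omega}\!\left[(r^\omega(\tau))^2\right] - \left(\E_{\pi_\omega}[r^\omega(\tau)]\right)^2,
\end{equation}
and then handle the two terms separately.

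For the first moment, I would invoke the Joint Support Coverage assumption (specialized to the single behavior policy $\pi_\omega$, noting this lemma explicitly assumes the support condition is satisfied) to perform the standard IS change of measure:
\begin{equation}
\E_{\pi_\omega}\!\left[\frac{\pi_\theta(\tau)}{\pi_\omega(\tau)}\right] = \int_{\supp(\pi_\omega)} \pi_\omega(\tau)\,\frac{\pi_\theta(\tau)}{\pi_\omega(\tau)}\,d\tau = \int \pi_\theta(\tau)\,d\tau = 1,
\end{equation}
so the squared-mean term contributes exactly $1$. For the second moment, an analogous computation gives
\begin{equation}
\E_{\pi_\omega}\!\left[\left(\frac{\pi_\theta(\tau)}{\pi_\omega(\tau)}\right)^2\right] = \int \frac{\pi_\theta(\tau)^2}{\pi_\omega(\tau)}\,d\tau,
\end{equation}
where the integrand is well-defined precisely because of support coverage (otherwise the quantity is $+\infty$, consistent with the claim).

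The final step is to match this expression to $\chi^2(\pi_\theta,\pi_\omega)$. Expanding the definition,
\begin{equation}
\chi^2(\pi_\theta,\pi_\omega) = \int \frac{(\pi_\theta(\tau)-\pi_\omega(\tau))^2}{\pi_\omega(\tau)}\,d\tau = \int \frac{\pi_\theta(\tau)^2}{\pi_\omega(\tau)}\,d\tau - 2\int \pi_\theta(\tau)\,d\tau + \int \pi_\omega(\tau)\,d\tau = \int \frac{\pi_\theta(\tau)^2}{\pi_\omega(\tau)}\,d\tau - 1,
\end{equation}
which coincides exactly with $\E_{\pi_\omega}[(r^\omega)^2] - 1 = \Var_{\pi_\omega}(r^\omega)$, completing the identification.

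There is no genuine obstacle here; the argument is a short algebraic manipulation once the moments are written down. The only subtlety worth flagging in the write-up is that the first moment evaluating to $1$ relies on the support condition $\supp(\pi_\theta)\subseteq\supp(\pi_\omega)$ (Lemma~\ref{lemma:is_support_bias} already quantified the failure when this breaks); without it, both sides of the identity diverge and the lemma still holds in the extended-real sense. This is worth remarking on because it is precisely this divergence that motivates the mixture denominator in the MIS construction developed in the main text.
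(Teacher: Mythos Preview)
Your proof is correct and follows essentially the same route as the paper's own argument: decompose the variance into second moment minus squared mean, use support coverage to get $\E_{\pi_\omega}[r^\omega]=1$, compute the second moment as $\int \pi_\theta^2/\pi_\omega\,d\tau$, and then expand the Pearson $\chi^2$ definition to match. The only difference is presentational---you add a brief remark on the extended-real case when support fails, which the paper omits.
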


\begin{proof}
The variance of the ratio is $\Var_{\pi_\omega}(r^\omega) = \E_{\pi_\omega}[(r^\omega)^2] - (\E_{\pi_\omega}[r^\omega])^2$. Under the support coverage condition, the expectation of the ratio is $\E_{\pi_\omega}[r^\omega] = 1$. We compute the second moment:
\begin{align*}
    \E_{\pi_\omega}[(r^\omega)^2] &= \int \pi_\omega(\tau) \left(\frac{\pi_\theta(\tau)}{\pi_\omega(\tau)}\right)^2 d\tau = \int \frac{\pi_\theta(\tau)^2}{\pi_\omega(\tau)} d\tau.
\end{align*}
By noting that $\chi^2(\pi_\theta, \pi_\omega) = \int \frac{(\pi_\theta(\tau) - \pi_\omega(\tau))^2}{\pi_\omega(\tau)} d\tau = \int \frac{\pi_\theta(\tau)^2}{\pi_\omega(\tau)}d\tau - 2\int\pi_\theta(\tau)d\tau + \int\pi_\omega(\tau)d\tau = \E_{\pi_\omega}[(r^\omega)^2] - 2 + 1 = \E_{\pi_\omega}[(r^\omega)^2] - 1$, we have:
\[
\E_{\pi_\omega}[(r^\omega)^2] = \chi^2(\pi_\theta, \pi_\omega) + 1.
\]
Therefore, the variance is:
\[
\Var_{\pi_\omega}(r^\omega) = (\chi^2(\pi_\theta, \pi_\omega) + 1) - 1^2 = \chi^2(\pi_\theta, \pi_\omega).
\]
Both the $\chi^2$-divergence and the more commonly known KL-divergence ($\KL(\pi_\theta \| \pi_\omega)$) are measures of dissimilarity between distributions (both are instances of f-divergences). A large value in one typically implies a large value in the other. Therefore, as the policies diverge, there are often regions where $\pi_\theta(\tau) \gg \pi_\omega(\tau)$. In these regions, the ratio $r^\omega(\tau)$ becomes extremely large, causing the variance to explode.
\end{proof}

\subsection{Bias Advantage of the MIS Estimator}

The standard MIS estimator is proven to be unbiased. In practice, a common and highly practical scenario involves using external data collected from the behavior policy, $\pione$, which may be far from the target policy $\pi_\theta$. To stabilize estimates, one can introduce a proxy policy, $\pitwo$ (e.g., a previous iterate of $\pi_\theta$), into the denominator of the importance weight. This creates a powerful estimator that deliberately accepts a small, controlled bias in exchange for a substantial reduction in variance. 
We now formally analyze the bias advantage of this practical MIS estimator compared to the aforementioned approaches.

\begin{remark}[Controlled Bias vs. Explosive Bias of Proxy IS]
\label{Controlled Bias}
This estimator is motivated by variance reduction. While biased, its bias is far more controlled than that of the proxy estimator from Lemma~\ref{lemma:is_proxy_bias}, which uses only $\pitwo$ in the denominator. A comparison of their bias-inducing factors is revealing:
\begin{itemize}
    \item \textbf{Proxy IS Factor:} $f_{\text{proxy}}(\tau) = \frac{\pione(\tau) - \pitwo(\tau)}{\pitwo(\tau)}$
    \item \textbf{Practical MIS Factor:} $f_{\text{MIS}}(\tau) = \frac{\pione(\tau) - \pitwo(\tau)}{\pione(\tau) + \pitwo(\tau)}$
\end{itemize}
When $\pitwo(\tau) \to 0$ for a trajectory that is plausible under $\pione$, the proxy IS factor can become arbitrarily large, leading to an uncontrolled, potentially infinite bias. In contrast, the practical MIS factor is a normalized difference and is strictly bounded within $(-1, 1)$. The presence of the true sampling distribution $\pione(\tau)$ in the denominator acts as a crucial \textbf{guardrail}, preventing the weights from exploding and ensuring the bias remains bounded.
\end{remark}

\begin{remark}[Overcoming Support Mismatch]
\label{Overcoming Support Mismatch}
The practical MIS estimator also offers a robust solution to the critical problem of support mismatch (Lemma~\ref{lemma:is_support_bias}), where $\supp(\pi_\theta) \not\subseteq \supp(\pione)$. 
The practical MIS estimator mitigates this by relying on the weaker joint support assumption, $\supp(\pi_\theta) \subseteq \supp(\pione) \cup \supp(\pitwo)$. By including $\pitwo$, it explicitly covers the full support of $\pi_\theta$ and eliminates the truncation error. In its place, it introduces a \textbf{distortion error}, given by the bounded bias term derived above. In essence, this estimator replaces a potentially infinite and unrecoverable truncation error, i.e,
$$
\mathcal{B}_{\text{support}} = - \int_{\tau \in \supp(\pi_\theta) \setminus \supp(\pi_{\beta_1})} \pi_\theta(\tau)R(\tau)d\tau
$$
, with a manageable and bounded distortion error, making it a far more robust choice for real-world applications.
\end{remark}

\subsection{Variance Advantage and Robustness of the MIS Estimator}
\label{Variance Advantage}

The core advantage of MIS lies in its variance control and robustness, and we formally analyze below.

\begin{theorem}[Variance Robustness of MIS]
\label{Variance Robustness of MIS}
So long as there is at least one policy in the behavior pool $\{\pi_{\beta_k}\}$ (e.g., $\pi_{\beta_k^*}$) that is a good approximation of the target policy $\pi_\theta$ (i.e., $\pi_{\beta_k^*} \approx \pi_\theta$), the variance of the MIS estimator will be low. The estimator is insensitive to other arbitrarily "bad" behavior policies in the pool.
\end{theorem}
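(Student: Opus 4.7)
The plan is to bound the variance of the MIS estimator via the second moment of its importance weight, and then exploit the mixture structure of the proposal distribution to isolate the contribution of the single good behavior policy $\pi_{\beta_{k^*}}$. Concretely, writing $\bar{\pi}_\beta(\tau) = \frac{1}{K}\sum_{k=1}^K \pi_{\beta_k}(\tau)$ and $w(\tau) = \pi_\theta(\tau)/\bar{\pi}_\beta(\tau)$, the MIS estimator is $\hat{J}_{\text{MIS}}(\theta) = \frac{1}{N}\sum_i w(\tau_i) R(\tau_i)$ with $\tau_i \sim \bar{\pi}_\beta$. Using the bounded-rewards assumption, the per-sample variance is dominated by $R_{\max}^2 \cdot \E_{\bar{\pi}_\beta}[w(\tau)^2]$, which after one cancellation equals $R_{\max}^2 \int \pi_\theta(\tau)^2/\bar{\pi}_\beta(\tau)\, d\tau$.

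The key step is the pointwise lower bound $\bar{\pi}_\beta(\tau) \ge \frac{1}{K}\pi_{\beta_{k^*}}(\tau)$, which follows immediately by discarding all other nonnegative components of the mixture. Substituting this into the variance integral and applying the identity used in Lemma \ref{Variance of IS Ratio} yields
$$\E_{\bar{\pi}_\beta}[w(\tau)^2] \;\le\; K \int \frac{\pi_\theta(\tau)^2}{\pi_{\beta_{k^*}}(\tau)}\, d\tau \;=\; K\bigl(\chi^2(\pi_\theta, \pi_{\beta_{k^*}}) + 1\bigr),$$
so that $\Var(\hat{J}_{\text{MIS}}) \le \frac{K R_{\max}^2}{N}\bigl(\chi^2(\pi_\theta,\pi_{\beta_{k^*}}) + 1\bigr)$. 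Since by hypothesis $\pi_{\beta_{k^*}} \approx \pi_\theta$, the $\chi^2$ term is small, proving the low-variance claim. The robustness claim then follows from the observation that the right-hand side involves \emph{only} $\pi_{\beta_{k^*}}$: no other $\pi_{\beta_j}$ appears, so even if $\chi^2(\pi_\theta,\pi_{\beta_j})$ is arbitrarily large for some bad $j$ (which would make single-policy IS diverge), the MIS bound is unchanged. Intuitively, the good policy in the denominator acts as a guardrail that caps $w(\tau)$ uniformly at $K \pi_\theta(\tau)/\pi_{\beta_{k^*}}(\tau)$, shielding the estimator from the bad policies. I would close by instantiating $K=2$ with $\pi_{\beta_1} = \pi_\omega$, $\pi_{\beta_2} = \pi_{\theta_{\text{old}}} \approx \pi_\theta$ to recover the concrete RL-PLUS ratio $r^m_{i,t}(\theta)$ and its robustness to arbitrarily adversarial $\pi_\omega$.

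The main obstacle is giving the informal phrases ``$\pi_{\beta_{k^*}} \approx \pi_\theta$'' and ``insensitive to bad policies'' rigorous meaning. I plan to resolve both by passing to the explicit bound above: ``low variance'' is made precise as ``controlled by $\chi^2(\pi_\theta,\pi_{\beta_{k^*}})$'', and ``insensitive'' is made precise as ``the upper bound is independent of every $\pi_{\beta_j}$ for $j \ne k^*$''. A secondary subtlety is that samples are drawn from the full mixture and can land in regions dominated by the bad policies; the pointwise bound on $w(\tau)$ handles this uniformly, which is why the argument goes through without needing to reason separately about the mixture components of the sampling distribution.
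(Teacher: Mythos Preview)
Your proposal is correct and rests on the same pivotal inequality as the paper's proof: the pointwise lower bound $\bar{\pi}_\beta(\tau) \ge \alpha_{k^*}\,\pi_{\beta_{k^*}}(\tau)$ on the mixture denominator, which caps the MIS weight independently of the other behavior policies. The paper stops at this pointwise bound, arguing qualitatively that $w(\tau) \lesssim 1/\alpha_{k^*}$ when $\pi_{\beta_{k^*}} \approx \pi_\theta$, and concludes that the weight is ``bounded from above by a constant.'' You push the same inequality one step further: you feed it into the second-moment integral, invoke the identity from Lemma~\ref{Variance of IS Ratio}, and obtain the explicit variance bound $\Var(\hat{J}_{\text{MIS}}) \le \tfrac{K R_{\max}^2}{N}\bigl(\chi^2(\pi_\theta,\pi_{\beta_{k^*}})+1\bigr)$. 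This is strictly more informative than the paper's argument—it makes precise what ``low variance'' and ``insensitive'' mean, exactly as you note in your obstacles paragraph—while the paper's version has the virtue of being shorter and emphasizing the guardrail intuition directly. Both are the same proof at heart; yours is the quantitative completion of the paper's qualitative sketch.
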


\begin{proof}[Proof]
We qualitatively analyze the behavior of the MIS weight $w(\tau) = \frac{\pi_\theta(\tau)}{\sum_j \alpha_j \pi_{\beta_j}(\tau)}$, whose magnitude directly drives the variance.

\textbf{Dilemma of Standard IS:} Assume we only use a "bad" policy $\pi_{\beta_m}$, for which the probability density approaches zero in some region $\mathcal{S}_{\text{bad}}$ ($\pi_{\beta_m}(\tau) \to 0$), while the target policy has non-negligible density there ($\pi_\theta(\tau) > \epsilon$). In this case, the standard IS ratio $\frac{\pi_\theta(\tau)}{\pi_{\beta_m}(\tau)}$ would diverge in $\mathcal{S}_{\text{bad}}$, causing the variance to explode.

\textbf{Advantage of MIS:} Now, we add a "good" policy $\pi_{\beta_k^*}$ to the pool, satisfying $\pi_{\beta_k^*} \approx \pi_\theta$. The denominator of the MIS weight is a mixture density: $\sum_j \alpha_j \pi_{\beta_j}(\tau)$. Even in the problematic region $\mathcal{S}_{\text{bad}}$, the denominator contains at least one term, $\alpha_{k^*} \pi_{\beta_{k^*}}(\tau) \approx \alpha_{k^*} \pi_\theta(\tau)$, which is positive and non-negligible. The MIS weight is therefore effectively bounded:
$$ w(\tau) = \frac{\pi_\theta(\tau)}{\alpha_{k^*} \pi_{\beta_{k^*}}(\tau) + \sum_{j \neq k^*} \alpha_j \pi_{\beta_j}(\tau)} \approx \frac{\pi_\theta(\tau)}{\alpha_{k^*} \pi_\theta(\tau) + \dots} \le \frac{\pi_\theta(\tau)}{\alpha_{k^*} \pi_{\beta_{k^*}}(\tau)} \approx \frac{1}{\alpha_{k^*}} = \frac{N}{n_{k^*}} $$
The weight is bounded from above by a constant that does not depend on the ratio of policies. The summation in the denominator acts as a \textbf{"variance guardrail"}, preventing the sampling deficiencies of any single policy from destabilizing the entire estimate.
\end{proof}

\begin{remark}[Practical Implications]
The robustness of MIS is especially critical when combining internal data (from an old policy $\pi_{\text{old}}$) and external data (from $\pi_\omega$). The policy $\pi_{\text{old}}$ ensures that the KL-divergence from the current policy $\pi_\theta$ is kept within a controllable range. This ensures that there is always a "good" policy in the pool. Therefore, even if the external policy $\pi_\omega$ is far from $\pi_\theta$, the MIS estimator can stabilize the variance through the presence of $\pi_{\text{old}}$. MIS achieves a "soft", unbiased form of variance control by mixing policy densities in the denominator. This adaptive weighting mechanism makes MIS a theoretically sound and highly effective choice for integrating heterogeneous data sources in policy optimization.
\end{remark}

\subsection{Optimal Bayesian Estimation of the Behavior Policy under Model Uncertainty}
\label{Optimal Bayesian Estimation}

We need a method to construct a robust estimator for $\pi_\omega$ that acknowledges our uncertainty. We propose a principled approach based on Bayesian decision theory to derive an optimal estimator for $\pi_\omega$ that explicitly balances our belief in the proxy model $\pi_{\theta_{\text{old}}}$ with a model of maximal uncertainty.

We frame the task of selecting an estimator $\hat{\pi}_\omega$ as a Bayesian decision problem.
\begin{itemize}
    \item \textbf{State of Nature}: The true, unknown behavior policy $\pi_\omega$.
    \item \textbf{Action}: Our choice of an estimator $\hat{\pi}_\omega$ for $\pi_\omega$.
    \item \textbf{Model Space $\mathcal{M}$}: The set of candidate models for $\pi_\omega$. Given our limited knowledge, we define a minimal, discrete model space that captures the dichotomy between our specific knowledge and our uncertainty.
    \item \textbf{Loss Function $L(\hat{\pi}_\omega, \pi_\omega)$}: A function that quantifies the error of our estimator. A standard choice is the squared L2-error, $L(\hat{\pi}_\omega, \pi_\omega) = \int \left( \hat{\pi}_\omega(\tau) - \pi_\omega(\tau) \right)^2 d\tau$.
\end{itemize}
Our goal is to find the estimator $\hat{\pi}_\omega$ that minimizes the \textbf{Bayes risk}, which is the expected loss with respect to our prior beliefs about the state of nature.

\begin{theorem}[Bayes-Optimal Policy Estimator]
\label{thm:bayes_optimal_estimator}
Let the model space for the unknown behavior policy $\pi_\omega$ be composed of two candidate models:
\begin{itemize}
    \item The specific proxy policy, $\pi_{\theta_{\text{old}}}$, representing our available, specific information.
    \item A non-informative uniform policy, $\mathcal{U}(\tau)$, representing maximal uncertainty.
\end{itemize}
Let the trajectory space $\mathcal{T}$ have a finite volume $V = \int_\mathcal{T} d\tau$, such that $\mathcal{U}(\tau) = 1/V$.
Under the \textbf{Principle of Indifference}, we assign equal prior probabilities to these models, i.e., $P(\pi_\omega = \pi_{\theta_{\text{old}}}) = P(\pi_\omega = \mathcal{U}) = 1/2$. Then, the estimator $\hat{\pi}_\omega$ that minimizes the Bayes risk (expected L2 error) is the Bayesian model average:
$$\hat{\pi}_\omega^*(\tau) = \frac{1}{2} \pi_{\theta_{\text{old}}}(\tau) + \frac{1}{2} \mathcal{U}(\tau)$$
\end{theorem}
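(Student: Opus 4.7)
The plan is to exploit the well-known fact from Bayesian decision theory that the minimizer of expected squared-error loss is the mean of the distribution over the unknown quantity. Since here the unknown is itself a function (a density over trajectories), the argument needs to be carried out pointwise or via calculus of variations, but the essential mechanism is the same.

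First I would write out the Bayes risk explicitly. Given the discrete prior on the model space $\mathcal{M} = \{\pi_{\theta_{\text{old}}}, \mathcal{U}\}$ with equal weights, the Bayes risk of any candidate estimator $\hat{\pi}_\omega$ becomes
\begin{equation*}
r(\hat{\pi}_\omega) = \tfrac{1}{2} \int_{\mathcal{T}} \bigl( \hat{\pi}_\omega(\tau) - \pi_{\theta_{\text{old}}}(\tau) \bigr)^2 d\tau + \tfrac{1}{2} \int_{\mathcal{T}} \bigl( \hat{\pi}_\omega(\tau) - \mathcal{U}(\tau) \bigr)^2 d\tau.
\end{equation*}
Because the integrand is a sum of nonnegative pointwise contributions and the two terms decouple across $\tau$, I can minimize the risk pointwise: for each fixed $\tau \in \mathcal{T}$, choose the scalar $\hat{\pi}_\omega(\tau)$ that minimizes $\tfrac{1}{2}(\hat{\pi}_\omega(\tau) - \pi_{\theta_{\text{old}}}(\tau))^2 + \tfrac{1}{2}(\hat{\pi}_\omega(\tau) - \mathcal{U}(\tau))^2$. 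This is an elementary univariate quadratic in $\hat{\pi}_\omega(\tau)$; setting its derivative to zero yields $\hat{\pi}_\omega^*(\tau) = \tfrac{1}{2} \pi_{\theta_{\text{old}}}(\tau) + \tfrac{1}{2} \mathcal{U}(\tau)$, matching the claim. The second derivative is positive, confirming this critical point is the unique minimizer.

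As a sanity check I would verify that $\hat{\pi}_\omega^*$ is itself a valid probability density over $\mathcal{T}$: nonnegativity follows immediately from the nonnegativity of its two components, and integrating over $\mathcal{T}$ gives $\tfrac{1}{2}\cdot 1 + \tfrac{1}{2}\cdot 1 = 1$ since both $\pi_{\theta_{\text{old}}}$ and $\mathcal{U}$ integrate to one (the latter by the assumed finite volume $V$). Thus the pointwise minimizer is a legitimate policy, so no projection back onto the density simplex is needed.

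The main obstacle I anticipate is justifying that pointwise minimization coincides with minimization of the integrated risk over the function space of candidate estimators. The cleanest way to handle this is to observe that for any $\hat{\pi}_\omega$, the integrated risk dominates the pointwise-optimal integrand by nonnegative excess, so $r(\hat{\pi}_\omega) \geq r(\hat{\pi}_\omega^*)$ with equality iff $\hat{\pi}_\omega = \hat{\pi}_\omega^*$ almost everywhere. Equivalently, one may invoke the standard functional-analytic fact that in the Hilbert space $L^2(\mathcal{T})$, the minimizer of $\tfrac{1}{2}\|f - g_1\|_{L^2}^2 + \tfrac{1}{2}\|f - g_2\|_{L^2}^2$ is the midpoint $\tfrac{1}{2}(g_1 + g_2)$, which is a direct consequence of the parallelogram law. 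Either route gives the result cleanly without requiring measure-theoretic overhead beyond the assumption that both $\pi_{\theta_{\text{old}}}$ and $\mathcal{U}$ lie in $L^2(\mathcal{T})$, which is ensured by the finite-volume assumption on $\mathcal{T}$.
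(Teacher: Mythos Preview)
Your proposal is correct and follows essentially the same approach as the paper: write out the Bayes risk as the weighted sum of two $L^2$ errors, reduce to pointwise minimization of a univariate quadratic, solve by setting the derivative to zero, and verify the result is a valid density. Your treatment is in fact slightly more careful than the paper's, since you explicitly justify why pointwise minimization suffices and check the second-order condition, whereas the paper simply asserts that the integrand can be minimized pointwise.
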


\begin{proof}
The Bayes risk of an estimator $\hat{\pi}_\omega$ is the expectation of the loss function over the prior distribution of $\pi_\omega$:
\begin{align*}
    R(\hat{\pi}_\omega) &= \E_{\pi_\omega} \left[ L(\hat{\pi}_\omega, \pi_\omega) \right] \\
    &= \sum_{\pi' \in \{\pi_{\theta_{\text{old}}}, \mathcal{U}\}} L(\hat{\pi}_\omega, \pi') P(\pi_\omega = \pi') \\
    &= \frac{1}{2} \int \left( \hat{\pi}_\omega(\tau) - \pi_{\theta_{\text{old}}}(\tau) \right)^2 d\tau + \frac{1}{2} \int \left( \hat{\pi}_\omega(\tau) - \mathcal{U}(\tau) \right)^2 d\tau
\end{align*}
To find the optimal estimator $\hat{\pi}_\omega^*$ that minimizes this risk, we can use the calculus of variations or simply note that the integrand is a sum of squared errors, which is minimized point-wise. For any given trajectory $\tau$, we seek to minimize:
$$f(\hat{\pi}_\omega(\tau)) = \left( \hat{\pi}_\omega(\tau) - \pi_{\theta_{\text{old}}}(\tau) \right)^2 + \left( \hat{\pi}_\omega(\tau) - \mathcal{U}(\tau) \right)^2$$
This is a simple quadratic function of the scalar value $\hat{\pi}_\omega(\tau)$. We find the minimum by taking the derivative with respect to $\hat{\pi}_\omega(\tau)$ and setting it to zero:
\begin{align*}
    \frac{\partial f}{\partial \hat{\pi}_\omega(\tau)} &= 2 \left( \hat{\pi}_\omega(\tau) - \pi_{\theta_{\text{old}}}(\tau) \right) + 2 \left( \hat{\pi}_\omega(\tau) - \mathcal{U}(\tau) \right) = 0 \\
    2\hat{\pi}_\omega(\tau) - \pi_{\theta_{\text{old}}}(\tau) - \mathcal{U}(\tau) &= 0 \\
    \hat{\pi}_\omega(\tau) &= \frac{1}{2} \left( \pi_{\theta_{\text{old}}}(\tau) + \mathcal{U}(\tau) \right)
\end{align*}
This result gives the point-wise minimizer. Integrating over all $\tau$ confirms that the optimal estimator function is:
$$\hat{\pi}_\omega^*(\tau) = \frac{1}{2} \pi_{\theta_{\text{old}}}(\tau) + \frac{1}{2} \mathcal{U}(\tau)$$
This estimator is known as the \textbf{Bayes estimator} under quadratic loss for this prior. It is optimal in the sense that no other estimator has a lower expected error, given our stated beliefs about the possible models for $\pi_\omega$. It is straightforward to verify that $\hat{\pi}_\omega^*(\tau)$ is a valid probability density function, as $\int \hat{\pi}_\omega^*(\tau)d\tau = \frac{1}{2}\int\pi_{\theta_{\text{old}}}(\tau)d\tau + \frac{1}{2}\int\mathcal{U}(\tau)d\tau = \frac{1}{2}(1) + \frac{1}{2}(1) = 1$.
\end{proof}

\begin{assumption}[Unit-Volume Trajectory Space]
For analytical tractability, we assume the trajectory space $\mathcal{T}$ is normalized to have unit volume, i.e., $\int_{\mathcal{T}} d\tau = 1$. Under this assumption, the maximum-entropy (uniform) distribution is $\mathcal{U}(\tau) = 1$ for all $\tau \in \mathcal{T}$.
\end{assumption}

\begin{remark}[Robustness and Connection to Regularization]
Theorem \ref{thm:bayes_optimal_estimator} provides a rigorous justification for what is, in essence, a form of regularization. The resulting estimator $\hat{\pi}_\omega^*$ is a mixture model that hedges against the deficiencies of $\pi_{\theta_{\text{old}}}$. The uniform component $\mathcal{U}(\tau)$ acts as a \textbf{“safety net”} or a \textbf{“defensive distribution”}. By ensuring that $\hat{\pi}_\omega^*(\tau) \ge \frac{1}{2V} > 0$ for all $\tau$, it guarantees that the importance sampling ratio's denominator is strictly positive and bounded away from zero. This prevents the variance of the importance weights from exploding, a critical property for stable off-policy learning.

The assumption $P=1/2$ reflects a state of maximal ambiguity between the specific information we have ($\pi_{\theta_{\text{old}}}$) and the general uncertainty we face ($\mathcal{U}$). It is the most conservative and robust choice when we cannot quantify our confidence in $\pi_{\theta_{\text{old}}}$. Thus, forming the estimator as their mean is the theoretically optimal strategy to navigate this uncertainty.
\end{remark}

\section{Theoretical Analysis of the Exploration-Based Advantage}

We provide a theoretical justification for the proposed Exploration-Based Advantage function. We prove that it adaptively focuses the policy gradient updates on high-value, hard-to-explore actions.

\subsection{Gradient Analysis}

We now analyze the effect of this advantage function on the policy gradient.

\begin{lemma}[Gradient Contribution of a Single Timestep]
\label{lem:gradient_contribution}
The gradient update for the policy parameters $\theta$ induced by the action $e_{i,t}$ from a correct, high-reward trajectory $i$ is given by:
$$ \Delta \theta_{i,t} \propto \nabla_\theta \log \pi_\theta(e_{i,t} | q, e_{i,<t}) \cdot A_{i} \cdot \left(1 - \pi_\theta(e_{i,t} | q, e_{i,<t})\right)^\gamma $$
\end{lemma}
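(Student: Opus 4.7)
The plan is to compute the gradient of the external-data term in the composite objective $\mathcal{J}_{\text{RL-PLUS}}(\theta)$ at the single token $(i,t)$ and show that three distinct factors—the policy log-gradient, the standardized advantage $A_i$, and the focal exploration weight $(1-\pi_\theta)^\gamma$—survive, while the multiple-importance-sampling ratio $r_{i,t}^m(\theta)$ collapses into a nonnegative scalar weight absorbed into the proportionality.

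First, I would isolate from the second (external-data) term of $\mathcal{J}_{\text{RL-PLUS}}$ the summand attributable to the $t$-th token of trajectory $e_i$, namely $r_{i,t}^m(\theta)\, A_{i,t}^c$, and apply the product rule. Two structural observations cut the computation short: (i) the denominator of $r_{i,t}^m(\theta) = 2\pi_\theta(e_{i,t}\mid q,e_{i,<t})/(\pi_\omega(e_{i,t}\mid q,e_{i,<t})+\pi_{\theta_{\text{old}}}(e_{i,t}\mid q,e_{i,<t}))$ does not depend on the current parameters $\theta$, so only the numerator contributes; and (ii) the \texttt{detach} operator inside $C_{i,t} = (1-\text{detach}(\pi_\theta(e_{i,t}\mid q,e_{i,<t})))^\gamma$ blocks backpropagation, so $\nabla_\theta A_{i,t}^c = 0$ and the exploration weight $C_{i,t}$ acts as a gradient-free multiplicative constant.

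Next, I would invoke the log-derivative identity $\nabla_\theta \pi_\theta(e_{i,t}\mid q,e_{i,<t}) = \pi_\theta(e_{i,t}\mid q,e_{i,<t})\,\nabla_\theta \log \pi_\theta(e_{i,t}\mid q,e_{i,<t})$, which together with observation (i) yields $\nabla_\theta r_{i,t}^m(\theta) = r_{i,t}^m(\theta)\,\nabla_\theta \log \pi_\theta(e_{i,t}\mid q,e_{i,<t})$. Substituting $A_{i,t}^c = A_i\,(1-\pi_\theta(e_{i,t}\mid q,e_{i,<t}))^\gamma$ and combining the pieces gives $\Delta\theta_{i,t} \propto r_{i,t}^m(\theta)\cdot A_i \cdot (1-\pi_\theta(e_{i,t}\mid q,e_{i,<t}))^\gamma \cdot \nabla_\theta \log \pi_\theta(e_{i,t}\mid q,e_{i,<t})$. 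Since $r_{i,t}^m(\theta) \ge 0$ is a per-token scalar independent of the gradient direction, folding it into the proportionality constant recovers exactly the stated expression.

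The main obstacle, and the only nontrivial point, is the careful bookkeeping around the \texttt{detach}: one must justify that the exploration weight $(1-\pi_\theta)^\gamma$ appears in the update as a pure amplifier of the log-gradient rather than spawning an additional chain-rule term through its dependence on $\pi_\theta$. Once the stop-gradient semantics are made explicit, and once it is verified that the MIS denominator is treated as a constant of the current optimization step (being built from $\pi_\omega$ and $\pi_{\theta_{\text{old}}}$, neither of which carries gradients with respect to $\theta$), the rest is a mechanical application of the log-derivative trick and the result follows immediately.
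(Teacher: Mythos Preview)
Your proposal is correct and follows essentially the same route as the paper: invoke the stop-gradient semantics of \texttt{detach} so that $C_{i,t}$ enters only as a scalar multiplier, then substitute $A^c_{i,t}=A_i\,C_{i,t}$ to obtain the stated form. The paper's own argument is terser—it simply asserts the policy-gradient shape $\nabla_\theta\log\pi_\theta\cdot A^c_{i,t}$ at the outset and absorbs the importance ratio implicitly—whereas you explicitly differentiate $r_{i,t}^m(\theta)\,A^c_{i,t}$, apply the log-derivative identity to recover $r_{i,t}^m\,\nabla_\theta\log\pi_\theta$, and then fold the nonnegative MIS weight into the proportionality; this extra bookkeeping is a refinement rather than a different approach.
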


\begin{proof}
The gradient update for the policy objective at timestep $t$ is proportional to $\nabla_\theta \log \pi_\theta(e_{i,t} | \dots) \cdot A^c_{i,t}$. Substituting the definition of $A^c_{i,t}$, we have:
$$ \Delta \theta_{i,t} \propto \nabla_\theta \log \pi_\theta(e_{i,t} | \dots) \cdot A_{i} \cdot C_{i,t}(\theta) $$
By the definition of $C_{i,t}(\theta)$ and the properties of the $\text{detach}$ operator, the term $C_{i,t}(\theta)$ is treated as a scalar weight during backpropagation. Substituting its definition yields the result.
\end{proof}

This lemma establishes the precise form of the gradient update. We now prove our main result: that this form adaptively focuses learning.

\begin{theorem}[Adaptive Gradient Focusing]
\label{thm:adaptive_focusing}
Given a high-reward trajectory where $A_{i} > 0$, the gradient magnitude of the update induced by $A^c_{i,t}$ is inversely related to the policy's confidence $\pi_\theta(e_{i,t}|\dots)$. The update is amplified for "hard" (low-probability) actions and suppressed for "easy" (high-probability) actions.
\end{theorem}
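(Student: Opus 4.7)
The plan is to build directly on Lemma~\ref{lem:gradient_contribution}, which isolates the scalar modulating factor $w(p) := A_i \cdot (1-p)^\gamma$ attached to the usual log-likelihood gradient $\nabla_\theta \log \pi_\theta(e_{i,t}\mid q, e_{i,<t})$, where $p := \pi_\theta(e_{i,t}\mid q, e_{i,<t}) \in (0,1)$ denotes the policy's confidence in the external token. The assumption $A_i > 0$ ensures $w(p) \geq 0$, so the \emph{direction} of the update is governed entirely by the log-likelihood gradient (which points in the ascent direction for $\pi_\theta(e_{i,t}\mid\cdot)$), while the dependence of its \emph{magnitude} on $p$ is carried exclusively by $w(p)$ --- since $C_{i,t}$ is detached, no gradient flows through it.

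First, I would compute $w'(p) = -\gamma A_i (1-p)^{\gamma-1}$, which is strictly negative on $p \in (0,1)$ for any $\gamma > 0$. This establishes that $w(p)$ is a strictly monotonically decreasing function of the confidence $p$. Evaluating the two endpoints then makes the informal ``amplification vs.\ suppression'' language precise: as $p \to 0^{+}$ one has $w(p) \to A_i$, recovering the full standard advantage signal for hard-to-explore tokens, whereas as $p \to 1^{-}$ one has $w(p) \to 0$, driving the gradient contribution of already-confident tokens to zero.

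Next, I would make the ``focusing'' aspect quantitative by contrasting with the vanilla GRPO update, whose modulating factor is the constant $A_i$ independent of $p$. The ratio $w(p)/A_i = (1-p)^\gamma$ shows that $A^c_{i,t}$ uniformly down-weights relative to the baseline, with a down-weighting factor that strictly increases with $p$. Concretely, for two tokens along the same correct trajectory with confidences $p_1 < p_2$, their relative gradient-magnitude ratio $(1-p_1)^\gamma / (1-p_2)^\gamma > 1$ grows with the gap and with $\gamma$, so the gradient budget is demonstrably reallocated from easy to hard tokens --- precisely the adaptive focusing behavior the theorem claims.

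The main obstacle is one of interpretation rather than calculation: formalizing ``inversely related'' carefully. The factor $(1-p)^\gamma$ is not literally proportional to $1/p$, so I would state the conclusion as strict monotone decrease in $p$ (the mathematically clean claim) and reserve the word ``inverse'' for this qualitative behavior. A secondary subtlety is that the total gradient norm also depends on $\lVert \nabla_\theta \log \pi_\theta(e_{i,t}\mid\cdot) \rVert$, whose dependence on $p$ is architecture-specific; because $C_{i,t}$ is detached, however, the monotonicity argument on $w(p)$ cleanly isolates the effect of our advantage reshaping and requires no assumption on the parameterization of $\pi_\theta$, which is what one wants for a general-purpose claim.
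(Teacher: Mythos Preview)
Your proposal is correct and follows essentially the same approach as the paper: both arguments invoke Lemma~\ref{lem:gradient_contribution} to isolate the scalar factor $A_i(1-p)^\gamma$ and then analyze its behavior in $p$, with the paper only checking the two limiting cases $p\to 0$ and $p\to 1$ while you additionally supply the derivative $w'(p)<0$ to get strict monotonicity and a quantitative comparison to vanilla GRPO. Your treatment is in fact more careful than the paper's, particularly in flagging that ``inversely related'' should be read as monotone decrease and in noting that the detach operation is what allows the argument to be parameterization-agnostic.
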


\begin{proof}
We analyze the asymptotic behavior of the scaling factor on the gradient, based on Lemma \ref{lem:gradient_contribution}. Let $p_t = \pi_\theta(e_{i,t} | \dots)$ denote the policy's probability for the correct action at time step $t$. The gradient is scaled by the factor $A_{i} \cdot (1 - p_t)^\gamma$. We consider two cases for the value of $p_t$.

\textbf{Case 1: Hard-to-Explore Correct Action.}
In this case, the policy assigns a low probability to the correct action, i.e., $p_t \to 0$. The exploration weight becomes:
$$ \lim_{p_t \to 0} C_{i,t}(\theta) = \lim_{p_t \to 0} (1 - p_t)^\gamma = 1 $$
The resulting gradient update, $\Delta \theta_{i,t} \propto \nabla_\theta \log p_t \cdot A_{i}$, retains its full magnitude. The learning signal from this valuable, unexplored action is preserved.

\textbf{Case 2: Easy-to-Explore Correct Action.}
In this case, the policy is already confident about the correct action, i.e., $p_t \to 1$. The exploration weight becomes:
$$ \lim_{p_t \to 1} C_{i,t}(\theta) = \lim_{p_t \to 1} (1 - p_t)^\gamma = 0 $$
The resulting gradient update vanishes: $\Delta \theta_{i,t} \to 0$. The model effectively ignores updates from examples it has already mastered.

\textbf{Conclusion.}
It demonstrates that the optimization process is focused on the gradients from actions where the policy is incorrect or uncertain, thereby prioritizing the learning of new knowledge. This proves that the advantage function leads to adaptive gradient focusing.
\end{proof}

\begin{figure*}[h!] 
    \centering 
    \includegraphics[width=1.01\textwidth]{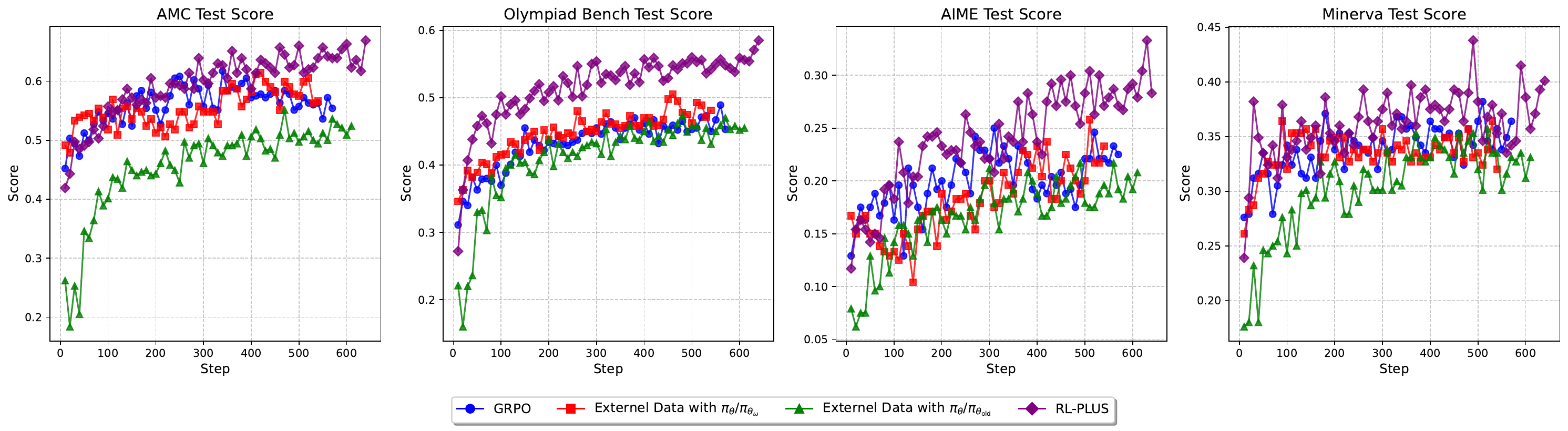} 
    \caption{Detailed Training dynamics of \ourapproach~and other baselines.} 
    \label{fig:training_dynamics1} 
\end{figure*}

\section{Effect of hyperparameter $\gamma$}

Our systematic investigation into the hyperparameter $\gamma$ in RL-PLUS, illustrated in Figure \ref{fig:hyperparam}, reveals two key findings. First, the model demonstrates considerable robustness, as its performance fluctuates only small across the tested range of $\gamma$. Second, RL-PLUS consistently surpasses the GRPO baseline across all math reasoning benchmarks, irrespective of the specific value of $\gamma$. Further analysis highlights a distinct trend: the model uniformly achieves its peak performance when $\gamma$=0.5. This optimal value holds not only for the Average test score but also across all individual benchmarks, including AMC, Olympiad, AIME, Minerva, and Math. This suggests that an intermediate value for $\gamma$ strikes an effective balance in the model's learning process. While the model is not highly sensitive to this parameter, the clear peak establishes $\gamma$=0.5 as a strong default, and there is still potential room for improvement with fine-grained tuning of $\gamma$.

\begin{figure*}[h!] 
    \centering 
    \includegraphics[width=1.01\textwidth]{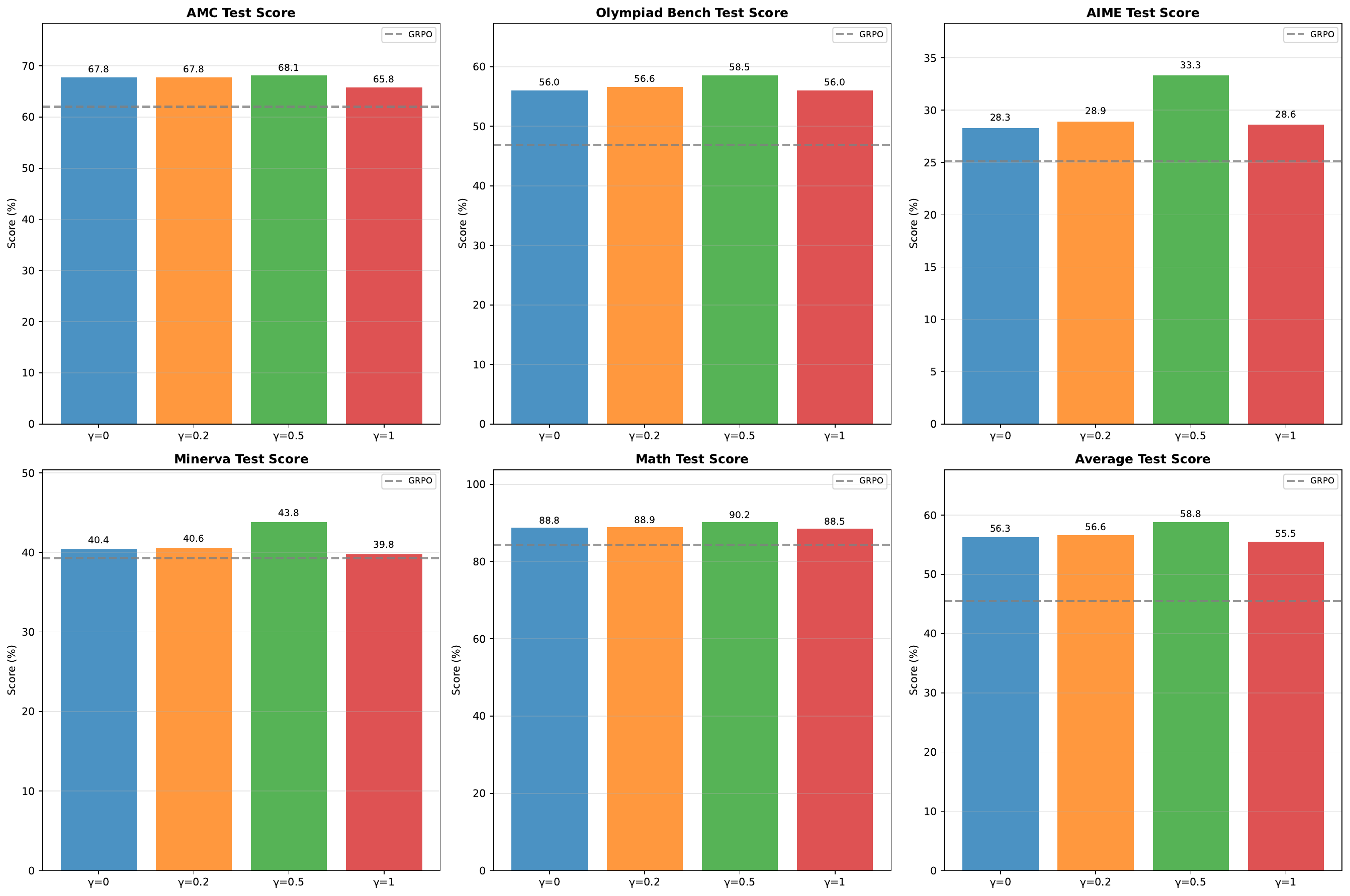} 
    \caption{Effect of hyperparameter $\gamma$ in \ourapproach.} 
    \label{fig:hyperparam} 
\end{figure*}

\section{Case Study}

\begin{figure*}[t!] 
    \centering 
    \includegraphics[width=1\textwidth]{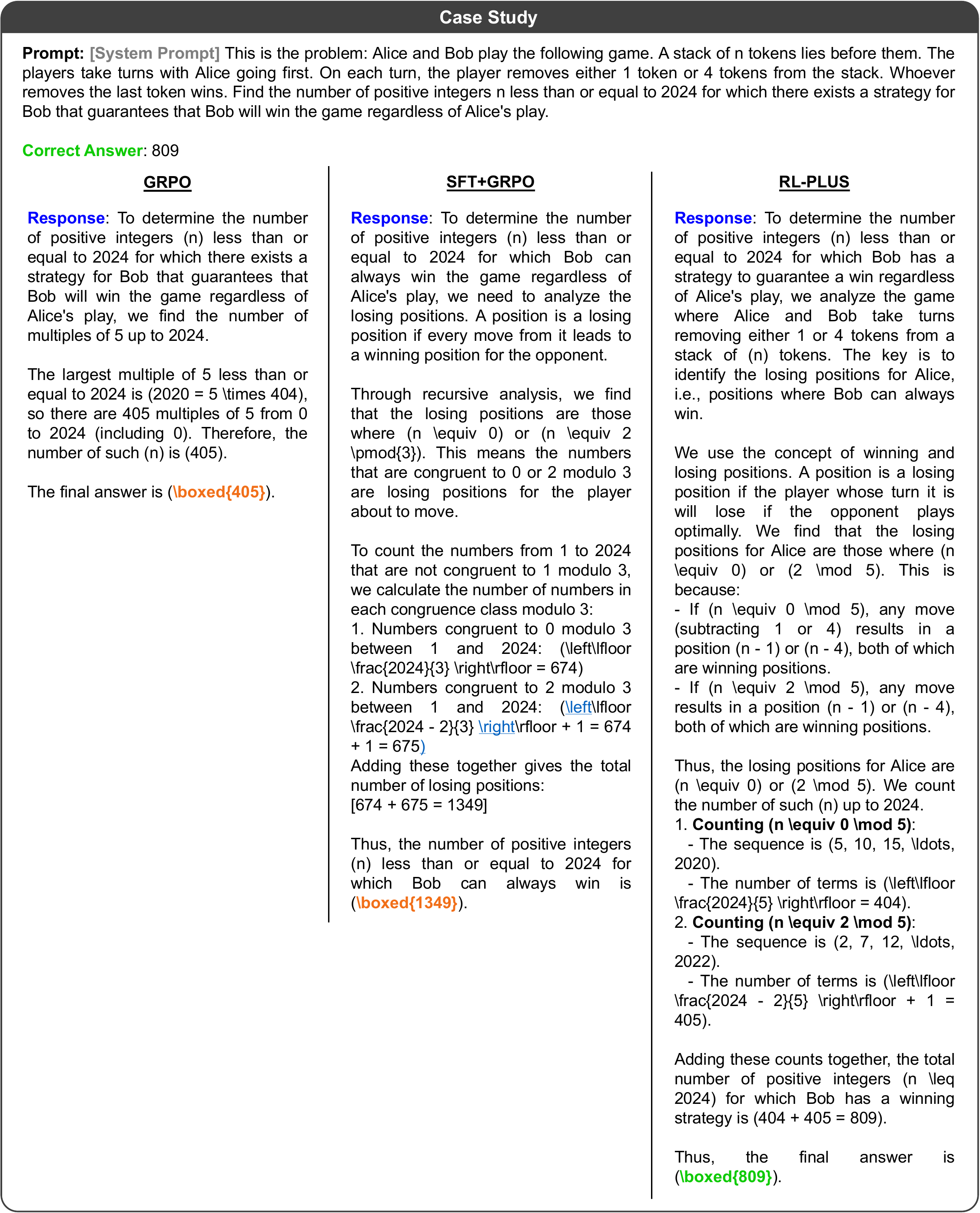} 
    \caption{A case of \ourapproach~compared with baselines GRPO and SFT+GRPO.} 
    \label{fig:case} 
\end{figure*}

Figure \ref{fig:case} presents a typical case study that visually contrasts the performance of RL-PLUS with the baseline methods, GRPO and SFT+GRPO. In this case, RL-PLUS demonstrates a significant advantage in both logical rigor and computational precision. Specifically, GRPO, while touching upon a part of the core issue by identifying `multiples of 5' as part of the losing positions, demonstrates an incomplete understanding. It fails to identify the other critical condition, thus arriving at an incorrect conclusion. SFT+GRPO's approach is fundamentally flawed. It completely misinterprets the game-theoretic model of the problem, erroneously applying an irrelevant 'modulo 3' logic, causing its reasoning to be incorrect from the outset. The performance of RL-PLUS is exemplary. It begins by accurately identifying the problem as a game of finding P-positions (second-player winning positions). Subsequently, through deductive reasoning, it successfully derives the complete pattern for the set of losing positions: when $n \equiv 0$ or $2\ mod\ 5$. Finally, it proceeds with a clear, step-by-step calculation for both conditions, sums them accurately, and arrives at the correct answer. This case provides compelling evidence that RL-PLUS possesses a more profound and comprehensive multi-step reasoning capability.

\section{Experimental Setup}

\paragraph{Training Details.}
All experiments are conducted on 8 NVIDIA A100 80G GPUs. By default, we use the Qwen2.5-Math-7B model \citep{yang2024qwen2} as the base model in our experiments. 
For our training, we use a subset of OpenR1-Math-220k \citep{openr1-math}, which contains 45,000 prompts with correct reasoning trajectories annotated by Deepseek-R1, and change the rope theta of Qwen2.5-Math-7B from 10000 to 40000 and extend the window size to 16384, following previous work \citep{LUFFY}. 
In implementing the RL algorithm, we leverage the VeRL framework \citep{sheng2024hybridflow}. We set the batch size to 128, the mini-batch size to 64, and the maximum training epoch to 2. For each problem, we use 8 rollout trajectories, with a maximum response length of 8192 tokens. For our approach, one of the model-generated rollouts is replaced with a correct reasoning trajectory from the training dataset. It is important to note that we ensure all other RL algorithms maintain the same parameter settings as RL-PLUS to guarantee a fair comparison. For the hyperparameter $\gamma$, we set it to 0.5 in all experiments by default. To validate the applicability of RL-PLUS on various base LLMs, we additionally extend RL-PLUS to other base models, including LLaMA-3.1-8B-instruct, Deepseek-Math-7B-instruct, and Qwen2.5-Math-1.5B.

\paragraph{Evaluation.}  
In line with established practices, we evaluate the performance of RL-PLUS on a comprehensive suite of standard mathematical reasoning benchmarks, including GSM8K~\citep{GSM8K}, MATH500~\citep{MATH}, Minerva Math~\citep{MinervaMath}, and OlympiadBench~\citep{OlympiadBench}, as well as on competition-level benchmarks such as AIME 2024~\citep{AIME_AMC} and AMC 2023~\citep{AIME_AMC}.  
Additionally, although our training focuses on math, we extend our evaluation to out-of-domain (OOD) tasks to assess the robustness and generalization capabilities of our approach. The OOD datasets include ARC-c~\citep{clark2018think}(Open-Domain Reasoning), GPQA-diamond~\citep{rein2024gpqa} (Science Graduate Knowledge), MMLU-Pro~\citep{wang2024mmlu} (Reasoning-focused Questions from Academic Exams and Textbooks), as well as three code generation datasets: HumanEval~\citep{codex}, LeetCode~\citep{guo2024deepseek}, and LiveCodeBench~\citep{jain2024livecodebench}. During evaluation, we set the sampling temperature to 0.6 and report the average pass@1 score over 5 runs by default.

\paragraph{Baselines.}  

We compare our approach with two categories of baselines, all trained upon the same base model. The first category comprises eight recently proposed RLVR methods, including: 1) \textbf{SimpleRL}~\citep{Simplerl} and 2) \textbf{OpenReasoner-Zero}~\citep{Open-reasoner-zero} are two open-source RL implementations that train starting from the base model using rule-based rewards. SimpleRL employs a token-level, length-rectified GRPO algorithm, while OpenReasoner-Zero utilizes the PPO algorithm. 3) \textbf{PRIME}~\citep{PRIME_Zero} introduces an implicit process reward based on outcome labels during RL. 4) \textbf{Oat-Zero}~\citep{oat-zero} modifies the GRPO algorithm by removing the standard deviation from the advantage computation and eliminating token-level normalization in the policy loss calculation. 5) \textbf{DAPO}~\citep{DAPO} optimizes GRPO algorithm by introducing four operations: Clip-Higher, Dynamic Sampling, a Token-Level Policy Gradient Loss, and Overlong Reward Shaping. 6) \textbf{LUFFY}~\citep{LUFFY} leverages off-policy reasoning trajectories to augment GRPO. 
7) \textbf{TAPO}~\citep{TAPO} integrates reasoning templates into GRPO sampling process to enhance the model's internal reasoning capabilities. 
8) \textbf{ReLIFT}~\citep{ReLIFT} performs RL and SFT alternately during training.
The second category consists of four straightforward baselines:
1) \textbf{SFT}, supervised fine-tuning using external reasoning trajectory data.
2) \textbf{GRPO}~\citep{DeepSeekMath}, training with GRPO algorithm on question-answer pairs.
3) \textbf{SFT+GRPO}, a common RL cold-start approach that performs SFT before RL training.
4) \textbf{GRPO w/ SFT Loss}, jointly optimizes the GRPO objective and SFT loss during training.

\end{document}